\def\free{{free}}
\def\Leibeq{\doteq}
\def\typea{\alpha}
\newcommand{\typearrow}{\shortrightarrow}
\newcommand{\itype}{i}
\newcommand{\proptype}{\tau}
\newcommand{\cjl}{DDL}
\def\Types{{T}}                     
\def\ar{\rightarrow}
\def\Vars{V}
\def\Consts{C}
\newcommand\entity[1]{\text{\textrm{#1}}}
\newcommand\hol[1]{#1}
\begin{document}

\title{Faithful Semantical Embedding of a Dyadic Deontic Logic in HOL 
}


\author{Christoph Benzm\"uller \and Ali Farjami \and  Xavier Parent}


 \institute{Christoph Benzm\"uller \at
  University of Luxembourg, Luxembourg, and Freie Universit\"at
  Berlin, Germany\\
   \email{c.benzmueller@gmail.com}  \and
 Ali Farjami, Xavier Parent \at
  University of Luxembourg, Luxembourg \\
   \email{ali.farjami@uni.lu,
     xavier.parent@uni.lu} 
 \\
}
 
\date{\today}

\maketitle

\begin{abstract}
  A shallow semantical embedding of a dyadic deontic logic by Carmo
  and Jones in classical higher-order logic is presented. This
  embedding is proven sound and complete, that is,  faithful.
 
  The work presented here provides the theoretical foundation for the
  implementation and  automation of dyadic deontic logic within
  off-the-shelf higher-order theorem provers and proof assistants.



\keywords{Dyadic deontic logic; Classical higher-order logic; Semantic
  embedding; Faithfulness}
\end{abstract}

\section{Introduction}


Dyadic deontic logic is the logic for reasoning with dyadic obligations (``it ought to be the case that ... if it is the case that ...").
A particular dyadic deontic logic, tailored to so-called contrary-to-duty conditionals, has been proposed by Carmo
and Jones \cite{CJ13}. We shall  refer to it as \cjl\ in
the remainder. \cjl\   comes with a neighborhood semantics and a weakly
complete axiomatization over the class of finite models. The
framework is immune to the well-known contrary-to-duty paradoxes, like Chisholm's paradox, and other related puzzles. 
%
However, the question of how to mechanise and automate reasoning tasks in \cjl\  has not been studied yet. 

This article adresses this challenge. We essentially devise a faithfull
semantical embedding of \cjl\ in classical higher-order logic
(HOL). The latter logic thereby serves as an universal meta-logic.
Analogous to successful, recent work in the area of
computational metaphysics (cf.~\cite{C66} and the references
therein), the key motivation is to mechanise and automate 
\cjl\ on the computer by reusing existing theorem proving technology for meta-logic
HOL. The embedding of \cjl\ in HOL as devised in this article enables
just this.



Meta-logic HOL \cite{AndrewsSEP}, as employed in this article, was
originally devised by Church \cite{Church40}, and further developed by
Henkin \cite{Henkin50}, Andrews
\cite{Andrews:ritt71,Andrews72a,Andrews72b}.  It bases both terms and formulas on simply typed
$\lambda$-terms.  The use of the $\lambda$-calculus
has some major advantages.  For example, $\lambda$-abstractions over
formulas allow the explicit naming of sets and predicates, something
that is achieved in set theory via the comprehension axioms.  Another
advantage is, that the complex rules for quantifier instantiation at
first-order and higher-order types is completely explained via the
rules of $\lambda$-conversion (the so-called rules of $\alpha$-,
$\beta$-, and $\eta$-conversion) which were proposed earlier by Church
\cite{church32am,church36ajm}.  These two advantages are exploited in
our embedding of \cjl\ in HOL.

Different notions of semantics for HOL have been thoroughly studied in
the literature \cite{J6,Muskens07}. In this article we assume HOL with
Henkin semantics (cf. the detailed description by Benzm\"uller
et.~al.~\cite{J6}). For this notion of HOL, which does not suffer from
G\"odel's incompleteness results, several sound and complete theorem
provers have been developed in the past decades \cite{BM14}. We
propose to reuse these systems for the automation of \cjl. The
semantical embedding as devised in this article provides both the
theoretical foundation for the approach and the practical
bridging technology that is enabling \cjl\ applications within existing HOL theorem provers.

%

The article is structured as follows: Section 2 outlines the syntax
and semantics of \cjl, as far as needed for this article. Section 3
provides a comparably detailed introduction into HOL; this is needed
to keep the article sufficiently self-contained. The semantical
embedding of \cjl\ in HOL is then devised and studied in
Sec.~4. This section also presents the respective soundness and completeness
proofs. 
Section~5 concludes the paper.

\section{The Dyadic Deontic Logic of Carmo and Jones} \label{sec:qcl}



This section provides a concise introduction of \cjl, the dyadic deontic logic
proposed by  Carmo and Jones. Definitions as required for
the remainder are presented. For further details we refer to the literature ~\cite{CJ13,Carmo2002}.

To define the formulas of \cjl\ we start with an countable
set of propositional symbols $P$, and we choose $\neg$ and $\vee$ as the only 
primitive connectives. 

The set of \textit{\cjl\ formulas} is given as the smallest set of
formulas obeying the following conditions:
\begin{itemize}[topsep=1pt,itemsep=0ex,partopsep=1ex,parsep=1ex]
\item Each $p^i\in P$ is an (atomic) \cjl\ formula.
\item Given two arbitrary \cjl\ formulas $\varphi$ and $\psi$, then \\
\begin{tabular}{lcl}
{$\neg \varphi$} & --- & \textit{classical negation}, \\
{$\varphi \vee \psi$} & --- & \textit{classical disjunction}, \\ 
{$\bigcirc(\psi/\varphi)$} & --- & \textit{dyadic deontic obligation: ``it ought to be $\psi$, given
$\varphi$''}, \\
{$\Box \varphi$} & --- & \textit{in all worlds}, \\ 
{$\Box_{a} \varphi$} & --- & \textit{in all actual versions of the
current world}, \\ 
{$\Box_{p} \varphi$} & --- & \textit{in all potential versions of the
current world}, \\
{$\bigcirc_{a}(\varphi)$}  & --- & \textit{monadic deontic operator for
actual obligation}, and \\ 
{$\bigcirc_{p}(\varphi)$} & --- & \textit{monadic deontic
operator for primary obligation} 
\end{tabular}

are also \cjl\ formulas.
\end{itemize}

Further logical connectives can be defined as usual. For example,
we may define $\varphi\wedge\psi := \neg (\neg \varphi \vee \neg \psi)$,
$\varphi \rightarrow \psi := \neg \varphi \vee \psi$,
$\varphi \longleftrightarrow \psi := (\varphi \rightarrow \psi) \wedge
(\psi \rightarrow \varphi)$, $\Diamond \varphi := \neg \Box \neg
\varphi$, $\Diamond_{a} \varphi := \neg \Box_{a} \neg \varphi$,
$\Diamond_{p} \varphi := \neg \Box_{p} \neg \varphi$, 
$\top := \neg q \vee q$, for some propositional symbol $q$,
and $ \bot := \neg \top$.



 

%
A \cjl\ \emph{model} is a structure $M = \langle S,av ,pv,ob,V 
\rangle$, where $S$ is a  non empty set of items called possible
worlds, $V$ is a function assigning a set of worlds to each atomic
formula, that is,~$ V(p^i)\subseteq S$. $av$: $S \rightarrow P(S) $, where
$P(S)$ denotes the power set of $S$, is a function mapping worlds to
sets of worlds such that $ av(s)\neq \emptyset$. $av(s)$ denotes the
set of actual versions of the world $s$.   
$pv$: $S \rightarrow P(S) $ is another, similar mapping such that
$av(s) \subseteq pv(s)$ and $s \in pv(s)$. $pv(s)$ denotes the set of potential versions of the world $s$.
$ob$: $P (S)  \rightarrow  P(P(S))$, which denotes the set of propositions
that are obligatory in context $\bar{X} \subseteq S$,  is a function mapping set of worlds to
sets of sets of worlds.  The following conditions hold for $ob$ (where
$\bar{X},\bar{Y},\bar{Z}$ designate arbitrary subsets of $S$):  

 \begin{enumerate}[topsep=1pt,itemsep=0ex,partopsep=1ex,parsep=1ex]
 	\item
 	$\emptyset \notin ob(\bar{X}). $
 	\item
 	$\text{If } \bar{Y} \cap \bar{X} = \bar{Z} \cap \bar{X}
        \text{, then } \bar{Y} \in ob(\bar{X}) \text{ if and only if } \bar{Z} \in ob(\bar{X}).$
 	\item
 	Let $\bar{\beta} \subseteq ob(\bar{X}) $ and $ \bar{\beta}
        \neq \emptyset $. If $ (\cap \bar{\beta} ) \cap \bar{X}\neq
        \emptyset$ (where $ \cap \bar{\beta} = \{s \in S\ | \mbox{ for all } \bar{Z}\in \beta \mbox{ we have } s\in \bar{Z} \} $), then $ ( \cap \bar{\beta}) \in ob(\bar{X})$.
 	\item
 	$\text{If }  \bar{Y}\subseteq \bar{X} \text{ and }  \bar{Y}
        \in ob(\bar{X}) \text{ and } \bar{X} \subseteq \bar{Z}, \text{
          then } (\bar{Z} \smallsetminus \bar{X})\cup \bar{Y} \in ob(\bar{Z}).$ 
 	\item
 	$\text{If }\bar{Y} \subseteq \bar{X} \text{ and }  \bar{Z} \in
        ob(\bar{X}) \text{ and } \bar{Y} \cap \bar{Z} \neq \emptyset,
        \text{ then } \bar{Z} \in ob(\bar{Y}).	 $ 
 \end{enumerate}	

%

%

\emph{Satisfiability} of a formula $\varphi$ for a model $M = \langle
S,av ,pv,ob,V \rangle$ and a world $s \in S$ is denoted by $M,s
\models \varphi$ and we define $ V^{M}(\varphi) = \{ s \in S\ |\
M,s\models \varphi \}$. In order to simplify the presentation,
whenever the model $ M $ is obvious from context, we write $
V(\varphi)  $ instead of $ V^{M}(\varphi)$. Moreover, we often use ``iff''
as shorthand for ``if and only if''.
\[
\begin{tabular}{lcl}
$M,s \models\; p$  & iff  &  $s \in  V(p)$
\\
$M,s \models\; \neg \varphi$ &  iff   &  $M,s \not\models \varphi \, (\text{that is, not } M,s \models \varphi)$
\\ 
$M,s \models\; \varphi \vee \psi$ &  iff   &  $M,s \models \varphi \text{ or } M,s \models \psi$
\\
$M,s \models\; \Box \varphi$  &   iff   &  $V(\varphi) = S$ 
\\
$M,s \models\; \Box_{a} \varphi$ &   iff   &  $av (s) \subseteq V(\varphi)$ 
\\
$M,s \models\; \Box_{p} \varphi$ &  iff   &  $pv (s) \subseteq V(\varphi)$ 
\\
$M,s \models\; \bigcirc (\psi /\varphi)$ &   iff   &  $V(\psi) \in ob(V(\varphi))$ 
\\
$M,s \models\; \bigcirc_{a} \varphi$  &   iff   &  $V(\varphi ) \in ob(av(s)) \text{ and } av(s)\cap V(\neg \varphi)\neq\emptyset$ 
\\
$M,s \models\; \bigcirc_{p} \varphi$ &  iff  &  $V(\varphi ) \in ob(pv(s))  \text{ and }  pv(s)\cap V(\neg \varphi)\neq\emptyset$
\end{tabular}
\]
Our evaluation rule for $\bigcirc (\_/\_)$ is a simplified version of
the one used by Carmo and Jones. Given the constraints placed on $ob$,
both rules are equivalent (cf. \cite[result II-2-2]{J31}).

As usual, a \cjl\ formula $\varphi$ is \emph{valid in a \cjl\
  model} $M = \langle S,av ,pv,ob,V \rangle$, denoted as
$M \models^{\cjl} \varphi$, if and only if for all worlds $s \in S$  holds $M,s \models \varphi$. A formula
$\varphi$ is \emph{valid}, denoted $\models^{\cjl} \varphi$, if and
only if it is valid in every \cjl\ model.

\section{Classical Higher-order Logic} \label{sec:hol} In this section
we introduce classical higher-order logic (HOL). The presentation,
which has partly 
been adapted from \cite{J31}, is rather detailed in order to keep the
article sufficiently self-contained.


\subsection{Syntax of HOL}
For defining the syntax of HOL, we first introduce the set ${T}$
of \emph{simple types}. We assume that ${T}$ is freely generated from a set
of \emph{basic types} $BT \supseteq \{o,i\}$ using the function type
constructor $\typearrow$. $o$ denotes the (bivalent) set of Booleans,
and $i$ a non-empty set of individuals. 

For the definition of HOL, we start out with a family of denumerable
sets of typed constant symbols $(\Consts_\alpha)_{\alpha\in T}$,
called \emph{signature},  and a
family of denumerable sets of typed variable symbols
$(\Vars_\alpha)_{\alpha\in T}$.\footnote{ For example in Section 4 we
  will  assume constant symbols $av$, $pv$ and $ob$ with types $i
  \typearrow i \typearrow o$, $i \typearrow i \typearrow o$ and $(i \typearrow o) \typearrow (i \typearrow o) \typearrow o$ as part of the signature.} We employ Church-style typing, where
each term $t_\alpha$ explicitly encodes its type information in
subscript $\alpha$. 

The \emph{language of HOL} is given as the smallest
set of terms obeying the following conditions. 

\begin{itemize}[topsep=1pt,itemsep=0ex,partopsep=1ex,parsep=1ex]
\item Every typed constant
symbol $c_{\alpha}\in \Consts_\alpha$ is a  HOL term of type $\alpha$.
\item Every typed variable symbol
$X_\alpha\in \Vars_\alpha$ is a  HOL term of type $\alpha$. 
\item If
$s _{\alpha \typearrow \beta}$ and $t _{\alpha}$ are HOL terms of types
${\alpha \typearrow \beta}$ and ${\alpha}$, respectively, then
$(s_{\alpha \typearrow \beta}\, t_\alpha)_{\beta}$, called
\emph{application}, is an HOL term of type $\beta$. 
\item If
$X_\alpha\in \Vars_\alpha$ is a typed variable symbol and $s_{\beta}$
is an HOL term of type $\beta$, then
$(\lambda X_{\alpha}s_{\beta})_{\alpha \typearrow \beta}$, called
\emph{abstraction}, is an HOL term of type
${\alpha \typearrow \beta}$. 
\end{itemize}

The above definition encompasses the simply typed
$\lambda$-calculus. In order to extend this base framework into logic HOL we
simply ensure that the signature $(\Consts_\alpha)_{\alpha\in T}$ provides
a sufficient selection of primitive logical connectives. Without
loss of generality, we here assume the following \emph{primitive
  logical connectives} to be part of the signature:
$\neg_{o \typearrow o} \in\Consts_{o \typearrow o}$,
$\vee_{o \typearrow o \typearrow o} \in\Consts_{o \typearrow o \typearrow o}$, 
$\Pi_{(\typea \typearrow o) \typearrow o} \in\Consts_{(\typea \typearrow o) \typearrow o}$
and $=_{\alpha\typearrow \alpha \typearrow \alpha}\in\Consts
_{\alpha\typearrow \alpha \typearrow \alpha}$, abbreviated as
$=^\alpha$. The symbols $\Pi_{(\typea \typearrow o)
  \typearrow o}$ and $=_{\alpha\typearrow \alpha \typearrow \alpha}$
are generally assumed for each type $\typea \in \Types$.
The denotation of the primitive
  logical connectives is fixed below according to their intended meaning. 
\emph{Binder notation} $\forall X_{\alpha} \,s_o$ is used as an abbreviation
for $\Pi_{(\alpha \typearrow o)\typearrow o}\lambda X_{\alpha}
s_{o}$. Universal quantification in HOL is thus modeled
with the help of the logical
constants $\Pi_{(\alpha \typearrow o)\typearrow o}$ to be used in
combination with lambda-abstraction. That is, the only binding mechanism provided in HOL is
lambda-abstraction. 

HOL is a logic of terms in the sense that the \emph{formulas of HOL} are
given as the terms of type $o$. In addition to the primitive logical connectives selected above, we could
assume \emph{choice operators}
$\epsilon_{(\alpha\typearrow o)\typearrow
  \alpha}\in\Consts_{(\alpha\typearrow o)\typearrow \alpha}$
(for each type $\alpha$) in the
signature. We are not pursuing this here.

Type information as well as brackets may be
omitted 
if obvious from the context, and we may also use infix notion to
improve readability.  For example, we may write $(s \vee t)$
instead of $((\vee_{o \typearrow o \typearrow o} s_o) t_o)$.

From the selected set of primitive connectives, other logical
connectives can be introduced as abbreviations.\footnote{As demonstrated by Andrews
  \cite{AndrewsSEP}, we could in fact start out with only
  primitive equality in the signature (for all types $\typea$) and
  introduce all other logical connectives as abbreviations based on
  it. The motivation for the redundant signature as selected here is
  to stay close to the the choices taken in implemented theorem
  provers such as LEO-II and Leo-III and also to theory paper
  \cite{J6}, which is recommended for further details.} For example,
we may define
$s\wedge t := \neg (\neg s \vee \neg t)$,
$s \rightarrow t := \neg s \vee t$,
$s \longleftrightarrow t := (s \rightarrow t) \wedge
(t \rightarrow s)$
, $ \top := (\lambda {X_i} X_i) = (\lambda {X_i} X_i)$, $\bot := \neg \top$ and
$\exists X_\typea s := \neg \forall X_\typea \neg s$.

Also equality can be defined in HOL by exploiting Leibniz' principle, expressing that two 
objects are equal if they share the same properties.
\emph{Leibniz equality} $\Leibeq^\alpha$ at type $\alpha$ is thus defined as $ s_\typea \Leibeq^\alpha t_\typea :=
\forall P_{\typea\typearrow o} (\neg P s \vee  P t)$. 

Each occurrence of a variable in a term is either bound by a $\lambda$
or free.  We use $\free(s)$ to denote the set of variables with a free occurrence in $s$.
We consider two terms to be \emph{equal} if the terms are the same up
to the names of bound variables, that is, we consider $\alpha$-conversion
implicitly.  

\emph{Substitution} of a term $s_{\alpha}$ for a variable $X_{\alpha}$
in a term $t_{\beta}$ is denoted by $[s/X]t$. Since we consider
$\alpha$-conversion implicitly, we assume the bound variables of $t$ to
avoid variable capture.

Well-known operations and relations on HOL terms include
\emph{$\beta\eta$-normaliza\-tion} and \emph{$\beta\eta$-equality}, denoted by $s
=_{\beta\eta} t$, \emph{$\beta$-reduction} and \emph{$\eta$-reduction}. A
\emph{$\beta$-redex} $(\lambda X s)t$ $\beta$-reduces to $[t/X]s$. An
\emph{$\eta$-redex} $\lambda X (s X)$, where $X\not\in\free(s)$, $\eta$-reduces to $s$.  We write $s =_{\beta} t$ to mean $s$ can
be converted to $t$ by a series of $\beta$-reductions and
expansions. Similarly, $s=_{\beta\eta} t$ means $s$ can be converted
to $t$ using both $\beta$ and $\eta$.

\subsection{Semantics of HOL}
The semantics of HOL is well understood and thoroughly documented. The
introduction provided next focuses on the aspects as
needed for this article. For more details we refer to the previously mentioned
literature \cite{J6}.

The semantics of choice for the remainder is Henkin
semantics, i.e., we work with Henkin's general models.  Henkin models
(and standard models) are introduced next. We start out with
introducing frame
structures.

A \emph{frame} $D$ is a collection $\{D_\alpha\}_{\alpha\in\entity{T}}$
of nonempty sets $D_\alpha$, such that $D_o = \{T,F\}$
(for truth and falsehood).  The
$D_{\alpha\ar\beta}$ are collections of functions mapping
$D_\alpha$ into $D_\beta$. 

A \emph{model} for HOL is a tuple
$\hol{M}=\hol{\langle D, I \rangle}$, where $\hol{D}$ is a frame, and
$\hol{I}$ is a family of typed interpretation functions mapping
constant symbols $\hol{p_\alpha}\in\Consts_\alpha$ to appropriate
elements of $\hol{D_\alpha}$, called the \emph{denotation of
  $\hol{p_\alpha}$}. The logical connectives $\hol{\neg}$,
$\hol{\vee}$, $\hol{\Pi}$ and $\hol{=}$ are always given their
expected, standard denotations:\footnote{Since $=_{\typea\ar\typea\ar o}$ (for all types $\typea$) is in the signature, it is ensured
  that the domains $\hol{D_{\alpha\ar\alpha\ar o}}$ contain the
  respective identity relations. This addresses an issue discovered by Andrews
  \cite{Andrews72b}: if such identity relations were not existing in the
  $\hol{D_{\alpha\ar\alpha\ar o}}$, then Leibniz equality in Henkin
  semantics may not denote as intended.
}

\begin{itemize}[topsep=1pt,itemsep=0ex,partopsep=1ex,parsep=1ex] \label{standard-denotation}
\item $I(\neg_{o \ar o}) = not \in D_{o \ar o}$ such that $not(T) = F$ and $not(F) =
T$.
\item $I(\vee_{o \ar o \ar o}) = or \in D_{o \ar o \ar o}$  such that $or(a,b) = T$ iff ($a = T$ or $b = T$).
\item $I(=_{\typea \ar \typea  \ar o}) = id \in D_{\typea \ar \typea
    \ar o}$  such that for all $a,b\in D_\typea$, $id(a,b) = T$ iff $a$ is identical to $b$.
\item $I(\Pi_{(\typea \ar o)  \ar o}) = all \in D_{(\typea \ar o)  \ar
    o}$  such that for all $s\in D_{\typea \ar o}$, $all(s) = T$ iff $s(a) = T$ for all $a\in D_\typea$; i.e., $s$ is the set
of all objects of type $\typea$.
\end{itemize}


\noindent
Variable assignments are a technical aid for the subsequent definition
of an interpretation function $\| . \|^{M,g}$ for HOL terms. This
interpretation function is parametric over a model $M$ and a variable
assignment $g$.

A \emph{variable assignment} $g$ maps
variables $X_\alpha$ to elements in $D_\alpha$. $g[d/W]$ denotes the
assignment that is identical to $g$, except for variable $W$, which is
now mapped to $d$.

  The \emph{denotation} $\hol{\| s_\alpha\|^{M,g}}$ of an HOL term
  $\hol{s_\alpha}$ on a model $\hol{M}=\hol{\langle D, I \rangle}$ under assignment $\hol{g}$ is an element $\hol{d}\in \hol{D_\alpha}$
  defined in the following way: 
\begin{center}
\begin{tabular}{rcl}
$\hol{\|p_\alpha\|^{M,g}}$ & $=$ & $\hol{I(p_\alpha)}$ \\[.5em]
$\hol{\|X_\alpha\|^{M,g}}$ & $=$ & $\hol{g(X_\alpha)}$ \\[.5em]
$\hol{\|(s_{\alpha\ar\beta}\, t_\alpha)_\beta\|^{M,g}}$ & $=$ &
                                                                $\hol{\|s_{\alpha\ar\beta}\|^{M,g}(\|t_\alpha\|^{M,g})}$ \\[.5em]
$\hol{\|(\lambda{X_\alpha} s_\beta)_{\alpha\ar\beta}\|^{M,g}}$ & $= $ &
  the function $\hol{f}$ from $\hol{D_\alpha}$ to $\hol{D_\beta}$ such
  that \\
 & & $\hol{f(d)} = \hol{\|s_\beta\|^{M,g[d/X_\alpha]}}$ for all
  $\hol{d}\in \hol{D_\alpha}$
\end{tabular}
\end{center}

A model $M=\langle D, I \rangle$ is called a \emph{standard model} if
and only if
for all $\alpha,\beta\in T$ we have
$D_{\alpha\ar\beta} = \{ f \mid f : D_\alpha \longrightarrow D_\beta
\}$.
In a \emph{Henkin model (general model)} function spaces are not
necessarily full. Instead it is only required that for all
$\alpha,\beta\in T$,
$D_{\alpha\ar\beta} \subseteq \{ f \mid f : D_\alpha \longrightarrow
D_\beta \}$.
However, it is required that the valuation function $\|\cdot\|^{M,g}$
from above is total, so that every term denotes. Note that this
requirement, which is called \emph{Denotatpflicht}, ensures that the
function domains $D_{\alpha\ar\beta}$ never become too sparse, that is,
the denotations of the lambda-abstractions as devised above are
always contained in them.  

\begin{corollary} For any Henkin model $M=\langle D, I \rangle$ and 
  variable assignment $g$ holds:

\begin{enumerate}[topsep=1pt,itemsep=0ex,partopsep=1ex,parsep=1ex]
\item $\hol{\|(\neg_{o\ar o}\, s_o)_o\|^{M,g}}  = \hol{T}$ \quad  iff \quad
   $\hol{\|s_o\|^{M,g}} = \hol{F}$. 
\item $\hol{\|(({\vee_{o\ar o\ar o}}\,s_o)\,t_o)_o\|^{M,g}} =
  \hol{T}$\quad  iff \quad $\hol{\|s_o\|^{M,g}} = \hol{T}$ or $\hol{\|t_o\|^{M,g}}
  = \hol{T}$. 
\item $\hol{\|(({\wedge_{o\ar o\ar o}}\,s_o)\,t_o)_o\|^{M,g}} =
 \hol{T}$\quad  iff \quad $\hol{\|s_o\|^{M,g}} = \hol{T}$ and $\hol{\|t_o\|^{M,g}}
 = \hol{T}$. 

\item $\hol{\|(({\rightarrow_{o\ar o\ar o}}\,s_o)\,t_o)_o\|^{M,g}} =
\hol{T}$ \quad  iff \quad  $\hol{\|s_o\|^{M,g}} = \hol{T}$ then $\hol{\|t_o\|^{M,g}}
= \hol{T}$. 

\item $\hol{\|(({\longleftrightarrow_{o\ar o\ar o}}\,s_o)\,t_o)_o\|^{M,g}} =
\hol{T}$\quad  iff \quad $\hol{\|s_o\|^{M,g}} = \hol{T}$ iff $\hol{\|t_o\|^{M,g}}
= \hol{T}$.  

\item $\hol{\|\top\|^{M,g}}  = \hol{T}$. 

\item $\hol{\|\bot\|^{M,g}}  = \hol{F}$. 

\item $\hol{\|(\forall X_\alpha s_o)_o\|^{M,g}} = \hol{\|(\forall_{(\alpha\ar o)\ar o}(\lambda{X_\alpha}
   s_o))_o\|^{M,g}} = \hol{T}$\quad  iff \quad  for all $\hol{d}\in
 \hol{D_\alpha}$ we have $\hol{\|s_o\|^{M,g[d/X_\alpha]}} = \hol{T}$.

\item $\hol{\|(\exists X_\alpha s_o)_o\|^{M,g}} = \hol{T}$\; iff \; there
  exists $\hol{d}\in
 \hol{D_\alpha}$ such that $\hol{\|s_o\|^{M,g[d/X_\alpha]}} = \hol{T}$.
\end{enumerate}
\begin{proof}
 We leave the proof as an  exercise to the reader.
\end{proof}
\end{corollary}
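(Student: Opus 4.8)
The plan is to establish all nine equivalences by unfolding the definition of the denotation function $\|\cdot\|^{M,g}$ on applications and abstractions, and then reading off the result from the fixed standard denotations of the primitive connectives $\neg$, $\vee$, $\Pi$ and $=$ stipulated above. The items split naturally into three groups: those for the primitive connectives (items~1, 2 and~8), those for the defined connectives (items~3--5, 7 and~9), and the constant $\top$ (item~6). Throughout, the crucial structural facts I rely on are $D_o=\{T,F\}$ and the two clauses of the denotation function governing application and $\lambda$-abstraction.

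First I would handle negation (item~1). Unfolding the application clause gives $\|(\neg_{o\ar o}\,s_o)\|^{M,g} = \|\neg_{o\ar o}\|^{M,g}(\|s_o\|^{M,g}) = I(\neg_{o\ar o})(\|s_o\|^{M,g}) = not(\|s_o\|^{M,g})$, and since $D_o=\{T,F\}$ with $not(T)=F$ and $not(F)=T$, the claim is immediate. Disjunction (item~2) is analogous: apply the application clause twice and read off $or$. For the defined connectives $\wedge$, $\rightarrow$ and $\longleftrightarrow$ (items~3--5) I would expand their defining abbreviations into $\neg$ and $\vee$ and appeal to items~1 and~2; because $D_o$ is two-valued, the resulting Boolean combinations collapse to the stated truth conditions. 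Item~7 then follows from item~1 once item~6 is in hand, and item~9 follows from items~1 and~8 after expanding $\exists X_\alpha s := \neg \forall X_\alpha \neg s$.

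For $\top$ (item~6), expanding the abbreviation $\top := (\lambda X_i X_i) = (\lambda X_i X_i)$ and using the denotation of $=$ gives $\|\top\|^{M,g} = id(f,f)$ where $f = \|\lambda X_i X_i\|^{M,g}$; since $id(a,b)=T$ iff $a$ and $b$ are identical, this evaluates to $T$. For universal quantification (item~8) I would expand the binder notation $\forall X_\alpha s := \Pi_{(\alpha\ar o)\ar o}(\lambda X_\alpha s)$, so that $\|\forall X_\alpha s\|^{M,g} = all(\|\lambda X_\alpha s\|^{M,g})$, and then invoke the abstraction clause, which tells us $\|\lambda X_\alpha s\|^{M,g}$ is the function $f$ with $f(d)=\|s\|^{M,g[d/X_\alpha]}$. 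By the definition of $all$, this value is $T$ iff $f(d)=T$ for every $d\in D_\alpha$, which is exactly the asserted condition.

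The only point requiring genuine care is item~8. The abstraction clause defines $\|\lambda X_\alpha s\|^{M,g}$ as a particular function from $D_\alpha$ to $D_o$, and in a general model one must know that this function actually lies in the possibly-sparse domain $D_{\alpha\ar o}$ before $all$ can legitimately be applied to it; this is guaranteed precisely by the \emph{Denotatpflicht}, which ensures that every term, including the $\lambda$-abstraction, denotes. Note also that $all$ quantifies over the Henkin domain $D_\alpha$ rather than the full function-theoretic universe, but since the corollary already relativises its statement to $d\in D_\alpha$, the two sides match verbatim and no discrepancy arises. With this observation secured, the remaining verifications are routine Boolean bookkeeping, which is presumably why the authors relegate them to the reader.
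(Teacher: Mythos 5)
Your proposal is correct and is exactly the routine unfolding the paper intends: the authors leave this corollary as an exercise, and the expected argument is precisely to expand the application/abstraction clauses of $\|\cdot\|^{M,g}$, read off the fixed denotations of $\neg$, $\vee$, $\Pi$ and $=$, and reduce the defined connectives and $\exists$ to the primitive cases. Your explicit remark that the Denotatpflicht guarantees $\|\lambda X_\alpha s_o\|^{M,g}\in D_{\alpha\ar o}$ in a general model, so that $all$ may be applied to it in item~8, is the one genuinely non-trivial point and you have handled it correctly.
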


 An HOL formula $s_o$ is \emph{true} in an Henkin model $M$ under
 assignment $g$ if and only if $\|s_o\|^{M,g} = T$;
 this is also denoted by $M,g \models^\text{HOL} s_o$.  An HOL formula
 $s_o$ is called \emph{valid} in $M$, which is denoted by
 $M\models^\text{HOL} s_o$, if and only if
 $M,g \models^\text{HOL} s_o$ for all assignments $g$. Moreover, a
 formula $s_o$ is called \emph{valid}, denoted by
 $\models^\text{HOL} s_o$, if and only if $s_o$ is valid in all
 Henkin models $M$. Finally, we define $\Sigma\models^\text{HOL} s_o$ for a set of HOL
 formulas $\Sigma$ if and only if $M\models^\text{HOL} s_o$ for
 all Henkin models $M$
 with $M\models^\text{HOL} t_o$ for all $t_o\in \Sigma$.

Note that any standard model is
obviously also a Henkin
model. Hence, validity of a HOL formula $s_o$ for all Henkin models,
implies  validity of $s_o$ for all standard models.

\section{Modeling \cjl\ as a Fragment of HOL} 

This section, as the core contribution of this article, presents a
shallow semantical embedding of \cjl\ in HOL and proves its soundness and
completeness.

\subsection{Semantical Embedding}
\cjl\ formulas are identified in our semantical embedding with certain
HOL terms (predicates) of type $i \typearrow o$. They can be applied
to terms of type $i$, which are assumed to denote possible worlds.
That is, the HOL type $i$ is now identified with a (non-empty) set of
worlds. 
Type $i \typearrow o$ is abbreviated as $\proptype$ in the
remainder. The HOL signature is assumed to contain the constant symbol
$ av_{\itype \typearrow \proptype }$,
$ pv_{\itype \typearrow \proptype}$ and
$ ob_{\proptype \typearrow \proptype \typearrow o}$. Moreover, for each propositional symbol $p^i$ of \cjl, the HOL
signature must contain a respective constant symbols
$p^i_\tau$. Without loss of generality, we assume that besides
those symbols and the primitive logical connectives of HOL, no 
other constant symbols are given in the signature of HOL. 

The mapping $\lfloor \cdot \rfloor$ translates \cjl\ formulas
$s$ into HOL terms $\lfloor s \rfloor$ of type
$\proptype$. The mapping is recursively defined:
\[
\begin{array}{lcl}
\lfloor p^i \rfloor &=& p^i_\proptype \\
\lfloor \neg s \rfloor &=&  \neg_\proptype\,\lfloor s \rfloor \\
\lfloor s\vee t \rfloor &=&  \vee_{\proptype \typearrow \proptype \typearrow \proptype}\,\lfloor s \rfloor \lfloor t \rfloor\\
\lfloor   \Box s   \rfloor &=&  \Box_{ \proptype \typearrow \proptype }\,\lfloor s \rfloor \\
\lfloor  \bigcirc(t/s)  \rfloor &=&  \bigcirc_{\proptype \typearrow \proptype \typearrow \proptype}\,\lfloor s \rfloor \lfloor t \rfloor \\
\lfloor   \Box_{a} s   \rfloor &=&  \Box^{a}_{\proptype\typearrow \proptype  }\,\lfloor s \rfloor \\
\lfloor   \Box_{p} s   \rfloor &=&  \Box^{p}_{\proptype\typearrow \proptype  }\,\lfloor s \rfloor \\
\lfloor  \bigcirc_{a}(s)  \rfloor &=&  \bigcirc^{a}_{\proptype \typearrow \proptype  }\,\lfloor s \rfloor  \\
\lfloor  \bigcirc_{p}(s)  \rfloor &=&  \bigcirc^{p}_{\proptype \typearrow \proptype  }\,\lfloor s \rfloor  \\
\end{array}
\]
 $\neg_{\proptype \typearrow \proptype} $, $\vee_{\proptype \typearrow
  \proptype \typearrow \proptype}$, $\Box_{\proptype \typearrow
  \proptype }\ $, $\bigcirc_{\proptype \typearrow \proptype \typearrow
  \proptype}\ $,  $ \Box^{a}_{\proptype \typearrow \proptype  }\ $, $
\Box^{p}_{ \proptype \typearrow \proptype}\ $, $
\bigcirc^{a}_{\proptype \typearrow \proptype  }\ $ and
$\bigcirc^{p}_{\proptype \typearrow \proptype}\ $ 
thereby abbreviate the following HOL terms:
\[
\begin{array}{ll}
\neg_{\proptype \typearrow \proptype} &= \lambda A_\proptype \lambda X_\itype \neg(A\,X)\\
\vee_{\proptype \typearrow \proptype \typearrow \proptype} &= \lambda A_\proptype \lambda B_\proptype \lambda X_\itype (A\,X \vee B\,X)\\
\Box_{\proptype \typearrow \proptype } & = \lambda A_\proptype
                                         \lambda X_\itype \forall
                                         Y_{\itype} (A\,Y) \\
\bigcirc_{\proptype \typearrow \proptype \typearrow \proptype} &=
                                                                 \lambda
                                                                 A_\proptype
                                                                 \lambda
                                                                 B_\proptype
                                                                 \lambda
                                                                 X_\itype
                                                                 (ob\,A\,B)\\
\Box^{a}_{\proptype\typearrow \proptype }\ & = \lambda A_\proptype
                                             \lambda X_\itype
                                             \forall{Y_\itype} (\neg
                                             (av\,X\,Y) \vee A\,Y)   \\
\Box^{p}_{\proptype \typearrow \proptype }\ & = \lambda A_\proptype  \lambda X_\itype      \forall{Y_\itype} (\neg (pv\,X\,Y) \vee (A\,Y)) \\
\bigcirc^{a}_{\proptype \typearrow \proptype  }\ & = \lambda
                                                   A_\proptype \lambda
                                                   X_\itype
                                                   ((ob\,(av\,X)\,A)
                                                   \wedge \exists
                                                   Y_\itype (av\, X\,
                                                   Y \wedge \neg (A\, Y) )) \\
\bigcirc^{p}_{\proptype \typearrow \proptype  }\ & = \lambda
                                                   A_\proptype \lambda
                                                   X_\itype
                                                   ((ob\,(pv\,X)\,A)
                                                   \wedge \exists
                                                   Y_\itype (pv\, X\,
                                                   Y \wedge \neg (A\, Y) )) \\
\end{array}
\]

Analyzing the truth of a translated formula
$\lfloor s \rfloor$ in a world represented by term $w_\itype$
corresponds to evaluating the application
$(\lfloor s \rfloor\,w_\itype)$.  In line with previous work
\cite{J23}, we define
$ \text{vld}_{\proptype\typearrow o} = \lambda A_\proptype \forall
S_\itype (A\, S)$.
With this definition, validity of a \cjl\ formula $s$ in \cjl\
corresponds to the validity of formula
$(\text{vld}\, \lfloor s \rfloor)$ in HOL, and vice versa.

\subsection{Soundness and Completeness}
To prove the soundness and completeness, that is, faithfulness, of the above embedding, a
mapping from \cjl\ models into Henkin models is employed.




\begin{definition}[Henkin model ${H}^{M}$ for \cjl\ model $M$]\label{def:hm}
\label{embedding} \sloppy
For any \cjl\ model $M = \langle S,av ,pv,ob,V \rangle$, we
define corresponding Henkin models $H^M$. Thus, let a \cjl\ model $M =
\langle S,av ,pv,ob,V \rangle$ be given. Moreover, assume that 
$p^{1},...,p^{m} \in P$, for $ m \geq 1 $, are the only propositional
symbols of  \cjl.
Remember that our embedding requires the corresponding signature of
HOL to provide constant symbols $p_{\proptype}^{j}$
such that $\lfloor p^{j} \rfloor = p_{\proptype}^{j}$ for $ j=1,\ldots,m$. 

An Henkin
model ${H}^{M} = \langle \{D_\alpha\}_{\alpha \in {T}}, I \rangle$ for
$M$ is now defined as follows: $D_\itype$ is chosen as the set of possible
worlds $S$; all other sets $D_{\alpha\typearrow\beta}$ are
chosen as (not necessarily full) sets of functions from $D_\alpha$ to
$D_\beta$.  For all $D_{\alpha\typearrow\beta}$ the rule that
every term $ t_{\alpha\typearrow\beta} $ must have a denotation in
$D_{\alpha\typearrow\beta}$ must be obeyed (Denotatpflicht). In particular, it is required that $D_{\proptype}$, $D_{i\typearrow\proptype}$ and $D_{ \proptype \typearrow \proptype \typearrow o}$ contain the elements $I p_{\proptype}^{j}$, $Iav_{i\typearrow\proptype}$, $I
pv_{i\typearrow\proptype}$ and $Iob_{\proptype \typearrow \proptype
  \typearrow o}$. The interpretation function $I$ of ${H}^{M}$ is
defined as follows:

\begin{enumerate}[topsep=1pt,itemsep=0ex,partopsep=1ex,parsep=1ex]
\item
For $i= 1,\ldots,m $, $ I p_{\proptype}^{i} \in
D_{\proptype} $ is chosen such that $I p_{\proptype}^{i}(s) = T $ iff $s\in
V(p^{j})$ in $M$. 
\item
 $Iav_{i\typearrow\proptype} \in D_{i\typearrow\proptype} $ is chosen 
 such that $I av_{i\typearrow\proptype}(s,u) = T $ iff $u \in av(s)$
 in $M$. 
\item
 $Ipv_{i\typearrow\proptype} \in D_{i\typearrow\proptype} $ is chosen
 such that $I pv_{i\typearrow\proptype}(s,u) = T $ iff $ u\in pv(s) $
 in $M$. 
\item  
$Iob_{\proptype \typearrow \proptype \typearrow o} \in D_{ \proptype
  \typearrow \proptype \typearrow o} $ is chosen such that
$Iob_{\proptype \typearrow \proptype \typearrow o} (\bar{X},\bar{Y}) =
T  $ iff $ \bar{Y} \in ob(\bar{X}) $ in $M$. 
\item For the logical connectives $\neg$, $\vee$, $\Pi$ and $=$ of
  HOL the interpretation function $I$ is defined as usual (see the
  previous section).
\end{enumerate}

Since we assume that there are no other
symbols (besides the $p^i_\tau$, $av$, $pv$, $ob$
and $\neg$, $\vee$, $\Pi$, and $=$)  in the signature of HOL, 
$I$ is a total function.
Moreover, the above construction guarantees
that ${H}^{M}$ is a Henkin model: $\langle D,I \rangle$ is a frame,
and the choice of $I$  in combination with the Denotatpflicht ensures
that for arbitrary assignments $g$,  $\|.\|^{{H}^{M},g}$ is an
total evaluation function.

\end{definition}


%
\begin{lemma}
Let ${H}^{M}$ be a Henkin model for a \cjl\ model $M$. In ${H}^{M}$ we
have for all $s \in D_{i}$ and all $\bar{X}, \bar{Y}, \bar{Z} \in
D_{\proptype}$ (cf. the conditions \cjl\ models as stated on page~3):\footnote{In the
  proof we implicitly employ curring and uncurring, and we associate
  sets with their characteristic functions. This analogously applies to the remainder of this article.} 

\noindent
\begin{tabular}{ll}
 (av) & $ Iav_{i\typearrow\proptype} (s) \neq
        \emptyset$. \\
(pv1) & $ Iav_{i\typearrow\proptype}
        (s)\subseteq Ipv_{i\typearrow\proptype}(s)$. \\
(pv2) & $s \in Ipv_{i\typearrow\proptype}(s)$. \\
(ob1) & $\emptyset \notin
        Iob_{\proptype \typearrow \proptype \typearrow o}(\bar{X})$. \\
(ob2) & $\text{If } \bar{Y} \cap \bar{X} = \bar{Z} \cap \bar{X}
        \text{, then } (\bar{Y} \in Iob_{\proptype \typearrow
        \proptype \typearrow o}(\bar{X})  \text{ iff }  \bar{Z} \in
        Iob_{\proptype \typearrow \proptype \typearrow o}(\bar{X} )
        )$. \\
(ob3) & Let $ \bar{\beta}
        \subseteq Iob_{\proptype \typearrow \proptype \typearrow
        o}(\bar{X} )$ and $\bar{\beta} \neq \emptyset $.\\
        & If $ (\cap
        \bar{\beta} ) \cap \bar{X}\neq \emptyset$, where $\cap
        \bar{\beta} = \{s \in S\ | \text{ for all } \bar{Z}\in
        \bar{\beta} \text{ we have }  s\in \bar{Z} \}$, \\
        &  then $ ( \cap
        \bar{\beta}) \in Iob_{\proptype \typearrow \proptype
        \typearrow o}(\bar{X}) $.\\
(ob4) & $\text{If } \bar{Y}\subseteq \bar{X} \text{ and } \bar{Y} \in
        Iob_{\proptype \typearrow \proptype \typearrow o}(\bar{X})
        \text{ and } \bar{X} \subseteq \bar{Z},$\\
        &  $\text{then }(\bar{Z}
        \setminus \bar{X})\cup \bar{Y} \in Iob_{\proptype \typearrow
        \proptype \typearrow o}(\bar{Z})$. \\
(ob5) & $\text{If } \bar{Y} \subseteq \bar{X} \text{ and } \bar{Z}
        \in Iob_{\proptype \typearrow \proptype \typearrow o}(\bar{X}
        ) \text{ and } \bar{Y} \cap \bar{Z} \neq \emptyset,$ \\
        & $\text{then } \bar{Z} \in Iob_{\proptype \typearrow \proptype \typearrow
        o}(\bar{Y})$.
\end{tabular}
\end{lemma}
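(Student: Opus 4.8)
The plan is to verify the nine clauses one by one by unfolding the definition of the interpretation function $I$ of $H^M$ from Definition~\ref{def:hm} and observing that, under the identification of a set with its characteristic function together with the implicit (un)currying, each clause reduces to one of the defining conditions of a \cjl\ model. Recall that Definition~\ref{def:hm} fixes $I\,av$, $I\,pv$ and $I\,ob$ so that, for $s\in D_\itype=S$ and $\bar X,\bar Y\in D_\proptype$, one has $I\,av(s)=av(s)$, $I\,pv(s)=pv(s)$, and $\bar Y\in I\,ob(\bar X)$ iff $\bar Y\in ob(\bar X)$; equivalently $I\,ob(\bar X)=ob(\bar X)\cap D_\proptype$. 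Since $M$ is a \cjl\ model, its components satisfy conditions 1--5 together with $av(s)\neq\emptyset$, $av(s)\subseteq pv(s)$ and $s\in pv(s)$, so the whole task is to transport these facts across this correspondence.

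With this correspondence in hand the clauses (av), (pv1), (pv2), (ob1), (ob2) and (ob5) are immediate. For example, (av) is just $I\,av(s)=av(s)\neq\emptyset$; (pv1) and (pv2) restate $av(s)\subseteq pv(s)$ and $s\in pv(s)$; (ob1) holds because $\emptyset\notin ob(\bar X)$ already excludes $\emptyset$ from the smaller set $I\,ob(\bar X)=ob(\bar X)\cap D_\proptype$; and (ob2) is condition~2 applied to $\bar Y,\bar Z\in D_\proptype$, membership in $I\,ob(\bar X)$ being equivalent to membership in $ob(\bar X)$. Clause (ob5) is similar: from $\bar Z\in I\,ob(\bar X)\subseteq ob(\bar X)$ and the hypotheses $\bar Y\subseteq\bar X$, $\bar Y\cap\bar Z\neq\emptyset$, condition~5 gives $\bar Z\in ob(\bar Y)$, and since $\bar Z$ is already in $D_\proptype$ we conclude $\bar Z\in ob(\bar Y)\cap D_\proptype=I\,ob(\bar Y)$. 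None of these requires forming a new set.

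The genuinely delicate clauses are (ob3) and (ob4), where the conclusion asserts that a freshly constructed set --- the generalized intersection $\cap\bar\beta$ in (ob3), and $(\bar Z\setminus\bar X)\cup\bar Y$ in (ob4) --- belongs to $I\,ob(\cdot)$. For this to be well-typed the constructed set must itself be an element of the (possibly non-full) Henkin domain $D_\proptype$; only then, via $I\,ob(\cdot)=ob(\cdot)\cap D_\proptype$, can it be placed in $I\,ob(\cdot)$. This closure requirement is the main obstacle, and it is precisely what the \emph{Denotatpflicht} supplies. In (ob4) the set $(\bar Z\setminus\bar X)\cup\bar Y$ is the denotation, under an assignment sending variables to $\bar X,\bar Y,\bar Z\in D_\proptype$, of the term $\lambda X_\itype((\bar Z\,X\wedge\neg(\bar X\,X))\vee\bar Y\,X)$ of type $\proptype$; hence it lies in $D_\proptype$, and condition~4 of $M$ then places it in $ob(\bar Z)$, so in $I\,ob(\bar Z)$. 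The case (ob3) is the hardest, since the intersection ranges over the possibly infinite family $\bar\beta$: viewing $\bar\beta$ as an element of $D_{\proptype\typearrow o}$, the set $\cap\bar\beta$ is the denotation of $\lambda X_\itype\forall Z_\proptype(\neg(\bar\beta\,Z)\vee Z\,X)$, so the Denotatpflicht again guarantees $\cap\bar\beta\in D_\proptype$; combining this with $\bar\beta\subseteq I\,ob(\bar X)\subseteq ob(\bar X)$, the hypothesis $(\cap\bar\beta)\cap\bar X\neq\emptyset$, and condition~3 of $M$, we obtain $\cap\bar\beta\in ob(\bar X)\cap D_\proptype=I\,ob(\bar X)$, as required.
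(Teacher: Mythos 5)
Your proof is correct and follows essentially the same route as the paper's: each clause is obtained by transporting the corresponding condition on $M$ through the definition of the interpretation function $I$ of $H^M$, reading $Iav(s)$, $Ipv(s)$ and $Iob(\bar X)$ as $av(s)$, $pv(s)$ and $ob(\bar X)$ restricted to the Henkin domains (your identity $Iob(\bar X)=ob(\bar X)\cap D_\proptype$ is exactly the right way to put it). The one substantive difference is that you explicitly discharge the closure obligation in (ob3) and (ob4): for the conclusions $\cap\bar\beta\in Iob(\bar X)$ and $(\bar Z\setminus\bar X)\cup\bar Y\in Iob(\bar Z)$ to be meaningful when $D_\proptype$ is not full, the freshly constructed sets must themselves lie in $D_\proptype$, and you obtain this from the Denotatpflicht by exhibiting each as the denotation of a $\lambda$-term of type $\proptype$ under a suitable assignment. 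The paper's own proof of this lemma passes over that point (``each statement follows by construction of $H^M$'') and supplies the corresponding argument only later, in the boxed Justifications * and ** inside the proof of Lemma~2; placing it here, as you do, is arguably where it belongs, since without it clause (ob3) is not even well-typed in a non-full Henkin model. (A trivial slip: the lemma has eight clauses, not nine.)
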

\begin{proof} Each statement follows by construction of
  ${H}^{M}$ for $M$. \\
(av): By definition of $ av $ for $ s \in S$ in $M$, $av(s) \neq \emptyset$; hence, there is $ u \in S $ such that $ u \in av(s) $. By definition of ${H}^{M}$, $Iav_{i\typearrow\proptype}(s,u) = T $, so $ u \in Iav_{i\typearrow\proptype}(s)$ and hence $ Iav_{i\typearrow\proptype}(s) \neq \emptyset$ in ${H}^{M}$.
\\
(pv1): By definition of $ av $ and $ pv $ for $ s \in S $ in $M$, $
av(s) \subseteq pv(s) $; hence, for every $ u \in av(s) $ we have $ u
\in pv(s) $. In ${H}^{M}$ this means, if
$Iav_{i\typearrow\proptype}(s,u) = T $, then $Ipv_{i\typearrow\proptype}(s,u) = T$. So, $Iav_{i\typearrow\proptype} (s)\subseteq Ipv_{i\typearrow\proptype}(s)$ in ${H}^{M}$.
\\
(pv2): This case is similar to (av).
\\
(ob1): By definition of $ ob $, we have  $\emptyset \notin ob(\bar{X})$; hence, in ${H}^{M}$, $Iob_{\proptype \typearrow \proptype \typearrow o}(\bar{X}, \emptyset) = F$, that is $ \emptyset \notin Iob_{\proptype \typearrow \proptype \typearrow o}(\bar{X}) $.
 \\
(ob2): Suppose $ \bar{Y} \cap \bar{X} = \bar{Z} \cap \bar{X} $. In $ M
$ we have $  \bar{Y} \in ob(\bar{X}) $ iff $ \bar{Z} \in ob(\bar{X})
$. By definition of ${H}^{M}$ we have $  Iob_{\proptype \typearrow \proptype \typearrow o}(\bar{X},\bar{Y}) = T $ iff $  Iob_{\proptype \typearrow \proptype \typearrow o}(\bar{X},\bar{Z}) = T $. Hence, $ \bar{Y} \in Iob_{\proptype \typearrow \proptype \typearrow o}(\bar{X}) $ iff $ \bar{Z} \in Iob_{\proptype \typearrow \proptype \typearrow o}(\bar{X}) $ in ${H}^{M}$.
 \\
 (ob3): Suppose  $ \bar{\beta} \subseteq Iob_{\proptype \typearrow \proptype \typearrow o}(\bar{X}) $ and $ \bar{\beta} \neq \emptyset $. If $ (\cap \bar{\beta} ) \cap \bar{X}\neq \emptyset$, by definition of $ ob $ in $M$ we have $ (\cap \bar{\beta} )\in ob(\bar{X}) $. Hence, in ${H}^{M}$, $Iob_{\proptype \typearrow \proptype \typearrow o}(\bar{X}, (\cap \bar{\beta} ) ) = T $ and then $  (\cap \bar{\beta} ) \in Iob_{\proptype \typearrow \proptype \typearrow o}(\bar{X})$.
 \\
 (ob4) and (ob5) are similar to (ob2). \qed
\end{proof}

\begin{lemma}
	Let ${H}^{M} = \langle \{D_\alpha\}_{\alpha \in {T}}, I
        \rangle$ be a Henkin model for a \cjl\ model $M$. We have $
        {H}^{M} \models^\text{HOL} \Sigma $ for all $\Sigma
        \in \{AV,PV1,PV2,OB1,...,OB5 \} $, where 
\begin{tabular}{lcl}
AV & is & $\forall W_{i} \exists V_{i}
        (av_{i\typearrow\proptype} W_{i} V_{i}) $ \\
PV1 & is & $\forall W_{i}  \forall V_{i}
        (av_{i\typearrow\proptype} W_{i} V_{i} \rightarrow
         pv_{i\typearrow\proptype} W_{i} V_{i})$ \\
PV2 & is & $\forall W_{i} (pv_{i\typearrow\proptype} W_{i} W_{i}) $ \\
OB1 & is & $ \forall X_{\proptype} \neg ob_{\proptype \typearrow
        \proptype \typearrow o} X_{\proptype} (\lambda
        X_{\proptype} \bot) $ \\
OB2 & is & $\forall X_{\proptype}Y_{\proptype}Z_{\proptype} (\,\,\,(\forall
        W_{i} ((Y_{\proptype} W_{i} \wedge X_{\proptype} W_{i})
        \longleftrightarrow (Z_{\proptype} W_{i} \wedge X_{\proptype}
        W_{i})))$ \\
   &       & \qquad \qquad \quad $\rightarrow 
		 (ob_{\proptype \typearrow \proptype \typearrow
  o} X_{\proptype} Y_{\proptype} \longleftrightarrow
        ob_{\proptype\typearrow \proptype \typearrow o} X_{\proptype}
        Z_{\proptype} ))$ \\
OB3 & is & $ \forall \beta_{\proptype \typearrow \proptype \typearrow
        o} \forall X_{\proptype}$ \\
  &    & $(\,\,\,((\forall Z_{\proptype}
        (\beta_{\proptype \typearrow \proptype \typearrow o}
        Z_{\proptype} \rightarrow ob_{\proptype \typearrow \proptype
        \typearrow o} X_{\proptype} Z_{\proptype})) \wedge \exists
        Z_{\proptype} (\beta_{\proptype \typearrow \proptype
        \typearrow o} Z_{\proptype}))$ \\
&   &    \quad  $\rightarrow (\,\,\, (\exists Y_{i} (((\lambda W_{i} \forall Z_{\proptype}
    (\beta_{\proptype \typearrow \proptype \typearrow o} Z_{\proptype}
    \rightarrow Z_{\proptype} W_{i}))\,Y_{i}) \wedge
  X_{\proptype} Y_{i}))$ \\
&   & \quad \qquad $\rightarrow
		ob_{\proptype \typearrow \proptype \typearrow o}
    X_{\proptype} (\lambda W_{i} \forall Z_{\proptype}
    (\beta_{\proptype \typearrow \proptype \typearrow o} Z_{\proptype}
    \rightarrow Z_{\proptype} W_{i}))))  $  \\
OB4 & is & $\forall X_{\proptype} Y_{\proptype} Z_{\proptype} $ \\
& & $(\,\,\, (\forall W_{i} (Y_{\proptype} W_{i} \rightarrow X_{\proptype} W_{i}) \wedge ob_{\proptype \typearrow \proptype \typearrow o}X_{\proptype}Y_{\proptype} \wedge \forall   X_{\proptype} (X_{\proptype}W_{i}\rightarrow Z_{\proptype}W_{i} )) $\\
&  &\quad $\rightarrow ob_{\proptype \typearrow \proptype \typearrow o}Z_{\proptype}(\lambda W_{i} ((Z_{\proptype}W_{i}\wedge \neg X_{\proptype}W_{i}) \vee Y_{\proptype}W_{i})))  $\\
OB5 & is & $\forall X_{\proptype} Y_{\proptype} Z_{\proptype}$ \\
& & $(\,\,\, (\forall W_{i} (Y_{\proptype}W_{i} \rightarrow X_{\proptype}W_{i}) \wedge ob_{\proptype\typearrow \proptype \typearrow o}X_{\proptype}Z_{\proptype}\wedge \exists
  W_{i} (Y_{\proptype}W_{i} \wedge Z_{\proptype}W_{i})) $ \\
& & \quad $\rightarrow ob_{\proptype \typearrow \proptype \typearrow o}Y_{\proptype}Z_{\proptype}) $\\
\end{tabular}
\end{lemma}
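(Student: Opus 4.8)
\noindent
The plan is to establish $H^M \models^\text{HOL} \Sigma$ for each
$\Sigma \in \{AV,PV1,PV2,OB1,\ldots,OB5\}$ by a uniform case analysis:
for a fixed but arbitrary variable assignment $g$, I unfold the
denotation $\|\Sigma\|^{H^M,g}$ step by step until it collapses to a
purely set-theoretic statement about $S$, $av$, $pv$ and $ob$, and then
observe that this statement is exactly one of the conditions (av),
(pv1), (pv2), (ob1)--(ob5) already proven for $H^M$ in the preceding
Lemma. Since every $\Sigma$ is a closed formula, its denotation does not
depend on $g$, so validity in $H^M$ follows at once from truth under a
single assignment.

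Two ingredients drive each unfolding. First, the earlier Corollary lets
me replace the HOL connectives and binders $\neg,\vee,\wedge,\rightarrow,\longleftrightarrow,\forall,\exists$
by their metalevel counterparts, with the quantifiers $\forall W_i$,
$\exists V_i$ ranging over $D_i = S$ and the quantifiers over type-$\proptype$
variables ranging over $D_\proptype$ (which, via the identification of sets
with their characteristic functions noted in the footnote, I read as
ranging over the available subsets $\bar X \subseteq S$). Second, the
definition of the interpretation $I$ in $H^M$ rewrites the atomic
subterms: $\|av\,W\,V\|^{H^M,g}$ is $T$ iff $u \in av(s)$ (where $s,u$
are the worlds assigned to $W,V$), $\|pv\,W\,V\|^{H^M,g}$ is $T$ iff
$u \in pv(s)$, and $\|ob\,X\,Y\|^{H^M,g}$ is $T$ iff $\bar Y \in ob(\bar X)$.
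The easy cases then fall out directly: $AV$ becomes ``for all $s$ there
is $u$ with $u \in av(s)$'', i.e.\ (av); $PV1$ becomes
$av(s) \subseteq pv(s)$, i.e.\ (pv1); $PV2$ becomes $s \in pv(s)$,
i.e.\ (pv2); $OB1$ becomes $\emptyset \notin ob(\bar X)$, i.e.\ (ob1);
and the biconditional antecedent of $OB2$ expresses exactly
$\bar Y \cap \bar X = \bar Z \cap \bar X$, so $OB2$ reduces to (ob2).

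The main obstacle is the treatment of $OB3$ and $OB4$, whose statements
contain lambda-abstractions meant to denote the compound sets appearing
in the model conditions. Concretely, I must verify that
$\lambda W_i \forall Z_\proptype (\beta\,Z \rightarrow Z\,W)$ denotes the
set $\cap\bar\beta$, and that
$\lambda W_i ((Z\,W \wedge \neg X\,W) \vee Y\,W)$ denotes
$(\bar Z \setminus \bar X) \cup \bar Y$. For this I apply the denotation
rule for abstractions: the value of such a term at a world $s \in S$ is
obtained by evaluating the body under the assignment extended by $s$,
and by the two ingredients above this value is $T$ precisely when $s$
lies in the intended set. The care required here is, first, in
bookkeeping the second-order quantifier over $\bar\beta \subseteq D_\proptype$
in $OB3$ so that its premise matches ``$\bar\beta \subseteq ob(\bar X)$
and $\bar\beta \neq \emptyset$'' together with
$(\cap\bar\beta)\cap\bar X \neq \emptyset$, and second, in checking that
the Boolean combination in the consequent of $OB4$ denotes exactly
$(\bar Z \setminus \bar X)\cup\bar Y$.

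\noindent
Once these two denotation identities are recorded, the antecedents and
consequents of $OB3$ and $OB4$ match verbatim the hypotheses and
conclusions of (ob3) and (ob4), and the remaining case $OB5$ contains no
compound abstraction: unfolding its three conjuncts gives
$\bar Y \subseteq \bar X$, $\bar Z \in ob(\bar X)$ and
$\bar Y \cap \bar Z \neq \emptyset$, and its consequent
$ob_{\proptype \typearrow \proptype \typearrow o}Y_\proptype Z_\proptype$
gives $\bar Z \in ob(\bar Y)$, which is precisely (ob5). Hence each
$\Sigma$ holds in $H^M$ under every assignment, as required.
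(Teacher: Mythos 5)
Your proposal is correct and follows essentially the same route as the paper's own proof: unfold each $\Sigma$ under an arbitrary assignment to a set-theoretic statement, invoke the corresponding clause of Lemma~1, and handle OB3/OB4 by checking that the lambda-abstractions denote $\cap\bar{\beta}$ and $(\bar{Z}\setminus\bar{X})\cup\bar{Y}$ respectively (the content of the paper's starred justifications). The only minor omission is the explicit appeal to the Denotatpflicht to ensure these denoted sets actually lie in $D_\proptype$, which is implicit in your remark that quantifiers range over the available subsets.
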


\begin{proof} We present detailed arguments for most cases. \\

\noindent AV: \\[-1.75em]
\begin{tabbing}
\qquad \= 
  For all $s \in D_{i}$: $I av_{i\typearrow\proptype} (s) \neq
  \emptyset$  \qquad (by Lemma 1 (av)) \\

$\Leftrightarrow$ 
   \> For all $s \in D_{i}$, there exists $u \in D_{i}$ such that $Iav_{i\typearrow\proptype}(s,u) = T$ \\

$\Leftrightarrow$ 
   \>  For all  assignments $g$, for all $s \in D_{i}$, there exists $u \in D_{i}$ such that  \\
   \> $\| av\, W\, V \|^{{H}^{M},g[s/W_{i}][u/V_{i}] }=T$ \\

$\Leftrightarrow$ 
  \> For all $g$, all $s \in D_{i}$ we have $\|\exists V (av\, W\, V) \|^{{H}^{M},g[s/W_{i}] }=T$ \\

$\Leftrightarrow$ 
  \> For all $g$ we have $\|\forall W \exists V (av\,W\,V)\|^{{H}^{M},g}=T$ \\

$\Leftrightarrow$ 
  \> ${H}^{M} \models^\text{HOL} AV $
\end{tabbing}

\noindent PV1: \\[-1.75em]
\begin{tabbing}
	\qquad \= 
	Given an arbitary assignment $g$, and arbitary $ s, u \in D_{i} $ such that \\
	\> $ \|av\, W\, V \|^{{H}^{M},g[s/W_\itype][u/V_\itype]}=T $\\
	
	$\Leftrightarrow$ 
	\>  $ Iav_{i\typearrow\proptype}(s,u)= T   $ \\
	
	$\Rightarrow$ 
	\> $Ipv_{i\typearrow\proptype}(s,u)= T  $ \qquad ($
        Iav_{i\typearrow\proptype} (s)\subseteq
        Ipv_{i\typearrow\proptype}(s) $, by Lemma 1 (pv1)) \\
	$\Leftrightarrow$ 
	\>  $ \|pv\, W\, V \|^{{H}^{M}, g [s/W_\itype][u/V_\itype] }=T  $
\end{tabbing}

\begin{tabbing}
	\qquad \= 
	
	Hence by definition of $\|.\|$, for all $g$, for all $ s, u \in D_{i}$ we have:\\
	
	\>  $  \|av\, W\, V \|^{{H}^{M},g[s/W_\itype][u/V_\itype]}=T $ implies $ \|pv\, W\, V \|^{{H}^{M},g[s/W_\itype][u/V_\itype]}=T $\\
	 
	$\Leftrightarrow$ 
	\>  For all $g$, all $ s, u \in D_{i} $ we have $ \| av\, W\, V \rightarrow pv\, W\, V\|^{{H}^{M},g[s/W_\itype][u/V_\itype]}=T $\\
	
	$\Leftrightarrow$ 
	\>  For all $ g $, all $ s \in D_{i}$ we have $ \| \forall V\ (av\, W\, V \rightarrow pv\, W\, V) \|^{{H}^{M},g[s/W_\itype]}=T $ \\
	
	$\Leftrightarrow$ 
	\> For all $ g $ we have $ \|\forall W\, \forall V\ (av\, W\, V  \rightarrow pv\, W\, V)) \|^{{H}^{M},g} = T $ \\
	$\Leftrightarrow$ 
	\>  $ {H}^{M} \models^\text{HOL} PV1 $
\end{tabbing}

\noindent PV2: \\[-1.75em]
\begin{tabbing} 
	\qquad \=  This case is analogous to AV. 
\end{tabbing}

\noindent OB1: \\[-1.75em]
\begin{tabbing}
	\qquad \= 
	 For all  $ \bar{X} \in D_{\proptype} $ : $ \emptyset \notin
         Iob_{\proptype \typearrow \proptype \typearrow o}(\bar{X}) $
         \qquad  (by Lemma 1 (ob1)) \\
	
	$\Leftrightarrow$ 
	\> For all  $g$, all $ \bar{X} \in D_{\proptype} $ we have  $ \|\neg ob\, X\, (\lambda X. \bot) \|^{{H}^{M}, g [\bar{X}/X_\proptype] }=T $  \\
	
	$\Leftrightarrow$ 
	\> For all $ g $ we have $ \|\forall X\, \neg (ob\, X\,
        (\lambda X_{\proptype} \bot)) \|^{{H}^{M},g[\bar{X}/X_\proptype]}=T $ \\
	
	$\Leftrightarrow$ 
	\> $ {H}^{M} \models^\text{HOL} OB1 $
\end{tabbing}

\noindent OB2: \\[-1.75em]
\begin{tabbing}
	\qquad \= 
    Given an arbitary assignment $ g $, and arbitary $ \bar{X}, \bar{Y}, \bar{Z} \in D_{\proptype} $ such that\\
	\> $ \|\forall W ((Y\, W  \wedge X\, W)\longleftrightarrow (Z\, W \wedge X\, W )) \|^{{H}^{M},g[\bar{X}/X_\proptype][\bar{Y}/Y_\proptype][\bar{Z}/Z_\proptype]} = T  $\\
	$\Leftrightarrow$ 
	\> For all $ s \in D_{i} $ we have\\
	\> $ \|(Y\, W \wedge X\, W) \longleftrightarrow (Z\, W \wedge X\, W)\|^{{H}^{M}, g [\bar{X}/X_\proptype] [\bar{Y}/Y_\proptype] [\bar{Z}/Z_\proptype][s/W_i] } = T  $\\
	
	$\Leftrightarrow$ 
	\> For all $ s \in D_{i} $ we have \\
	\> $ \|Y\, W \wedge X\, W \|^{{H}^{M},g[\bar{X}/X_\proptype][\bar{Y}/Y_\proptype] [\bar{Z}/Z_\proptype][s/W_i]} = T $ \quad  iff \\
	\> $ \| Z\, W \wedge X\, W\|^{{H}^{M},g[\bar{X}/X_\proptype][\bar{Y}/Y_\proptype] [\bar{Z}/Z_\proptype][s/W_i]} = T $\\
	
	$\Leftrightarrow$ 
	\> For all $ s \in D_{i} $ we have  $ s \in \bar{Y} \cap \bar{X} $ iff $ s \in  \bar{Z} \cap \bar{X} $ \\
	
	$\Leftrightarrow$ 
	\> $ \bar{Y} \cap \bar{X} = \bar{Z} \cap \bar{X} $ \\
	
	$\Rightarrow$ 
	\> $ Iob_{\proptype \typearrow \proptype \typearrow
          o}(\bar{X},\bar{Y}) = T $  iff  $ Iob_{\proptype \typearrow
          \proptype \typearrow o}(\bar{X},\bar{Z}) = T $ \quad (by
        Lemma 1 (ob2)) \\
	
	$\Leftrightarrow$ 
	\> $ \|ob\, X\, Y)\|^{{H}^{M}, g [\bar{X}/X_\proptype]
          [\bar{Y}/Y_\proptype] [\bar{Z}/Z_\proptype] } = T $  iff   \\
	\> $\|ob\, X\, Z \|^{{H}^{M}, g [\bar{X}/X_\proptype] [\bar{Y}/Y_\proptype] [\bar{Z}/Z_\proptype] } = T $ \\
	
	$\Leftrightarrow$ 
	\> $ \|ob\, X\, Y \longleftrightarrow ob\, X\, Z \|^{{H}^{M}, g [\bar{X}/X_\proptype] [\bar{Y}/Y_\proptype] [\bar{Z}/Z_\proptype] } = T $  
	\\
	\\
	Hence, by definition of $\|.\|$, for all $ g $, for all  $ \bar{X}, \bar{Y}, \bar{Z} \in D_{\proptype} $  we have:\\ 
	
	\> $\|( \forall W (\, ((Y\, W \wedge X\, W) \longleftrightarrow (Z\, W \wedge X\, W))  $ \\
	\>$ \qquad \quad \rightarrow (ob\, X\, Y \longleftrightarrow ob \, X\, Z))\|^{{H}^{M},g [\bar{X}/X_\proptype] [\bar{Y}/Y_\proptype] [\bar{Z}/Z_\proptype] }= T  $\\
	
	$\Leftrightarrow$ 
	\>  For all $g$ we have\\
	\>$\|\forall X Y Z (\forall W (\,((Y\, W \wedge X\, W ) \longleftrightarrow (Z\, W \wedge X\, W)) $ \\
	\> $\qquad \quad \qquad \quad \rightarrow (ob\, X\, Y \longleftrightarrow ob\, X\, Z )) \|^{{H}^{M}, g } = T$\\
	
	$\Leftrightarrow$ 
	\>  $ {H}^{M} \models^\text{HOL} OB2 $
\end{tabbing}

\vfill

\noindent OB3: \\[-1.75em]
\begin{tabbing}
	\qquad \= 
	 Given an arbitary assignment $g$, and arbitary $ \bar{\beta}
         \in D_{\proptype \typearrow o}, \bar{X} \in D_{\proptype} $
         \\
        \> such that\\
	\> $ \| \forall Z  (\beta\, Z \rightarrow ob\, X\, Z) \|^{{H}^{M}, g  [\bar{\beta} /\beta_{\proptype \typearrow o}] [\bar{X}/X_\proptype] }= T $  \quad  and  \\  
	\> $ \| \exists Z (\beta\, Z) \|^{{H}^{M},g[\bar{\beta} /\beta_{\proptype \typearrow o}] }= T $ \quad  and \\
	\> $ \| \exists Y (((\lambda W \forall Z (\beta\, Z
        \rightarrow Z\, W))\, Y) \wedge X\, Y) \|^{{H}^{M},g[\bar{\beta} /\beta_{\proptype \typearrow o}] [\bar{X}/X_\proptype] }= T $ \\
	$\Leftrightarrow$ 
	\> For all  $ \bar{Z} \in D_{\proptype} $ we have\\
	\>  \quad $ \| \beta\, Z \|^{{H}^{M},g[\bar{\beta}
          /\beta_{\proptype \typearrow o}][\bar{X}/X_\proptype]
          [\bar{Z}/Z_\proptype] }= T $ implies  \\  
        \> \quad $ \|ob\, X\, Z \|^{{H}^{M}, g [\bar{\beta}
          /\beta_{\proptype \typearrow
            o}][\bar{X}/X_\proptype][\bar{Z}/Z_\proptype]}= T $  \quad
        and  \\
	\> there exists $ \bar{Z} \in D_{\proptype}  $ such that $ \| \beta \, Z  \|^{{H}^{M},g[\bar{\beta} /\beta_{\proptype \typearrow o}] [\bar{Z}/Z_\proptype] }= T $  \quad and \\
	\>  there exists  $s \in D_i$ such that \\
	\> $ \|(\lambda W \forall Z (\beta\, Z \rightarrow Z\, W))\, Y \wedge X\, Y \|^{{H}^{M},g[\bar{\beta} /\beta_{\proptype \typearrow o}] [\bar{X}/X_\proptype][s/Y_{i}]}= T $ \\
	
	$\Leftrightarrow$ 
	\> For all $ \bar{Z} \in D_{\proptype} $ we have $\bar{Z} \in \beta$ implies  $\bar{Z} \in Iob_{\proptype \typearrow \proptype \typearrow o}(\bar{X}) $ \quad   and \\ 
	\> there exists  $ \bar{Z} \in D_{\proptype}$ such that $ \bar{Z} \in \bar{\beta} $  \quad  and   \\ 
	\> there exists  $ s \in D_i$ such that $ s \in
        \cap\bar{\beta} $ and $ s\in \bar{X} $ \quad (\textbf{see *})\\
	
	$\Leftrightarrow$ 
	\> $ \bar{\beta} \subseteq Iob_{\proptype \typearrow \proptype \typearrow o}(\bar{X}) $ and $ \bar{\beta} \neq \emptyset $  and  $ (\cap\bar{\beta})\cap \bar{X}\neq\emptyset $ \\
	
	$\Rightarrow$ 
	\> $ Iob_{\proptype \typearrow \proptype \typearrow
          o}(\bar{X}, ( \cap \bar{\beta})) = T $ \quad (by Lemma 1 (ob3)) \\
	
	$\Leftrightarrow$ 
	\> $ \|ob\, X\, (\lambda W  \forall Z (\beta\, Z  \rightarrow  Z\, W ) )\|^{{H}^{M},g [\bar{\beta} /\beta_{\proptype\typearrow o}][\bar{X}/X_\proptype] }= T  $
	\\
	\\ 
    \> Hence by definition of $\|.\|$, for all $g$, all $ \bar{\beta} \in D_{\proptype \typearrow o} $, all $\bar{X} \in D_{\proptype} $ we have:\\ 
	
	\> $ \| ((\forall Z (\beta\, Z  \rightarrow ob\, X\, Z  ) )
	\wedge (\exists Z (\beta\, Z  )))$ \\
	\> $\rightarrow ( (\exists Y (((\lambda W \forall Z (\beta\, Z \rightarrow Z\, W))Y) \wedge X\, Y) ) $ \\
	\> $\rightarrow ob\, X\, (\lambda W  \forall Z (\beta\, Z  \rightarrow  Z\, W ) ) ) \|^{{H}^{M},g[\bar{\beta} /\beta_{\proptype\typearrow o}][\bar{X}/X_\proptype] } = T $ \\
	
	
	$\Leftrightarrow$ 
	\>  For all $g$, we have \\
	\> $ \| \forall \beta  \forall X ( ((\forall Z (\beta\, Z  \rightarrow ob\, X\, Z  ) )
	\wedge (\exists Z (\beta\, Z  )))$ \\
	\> $\rightarrow ( (\exists Y (((\lambda W \forall Z (\beta\, Z \rightarrow Z\, W))Y) \wedge X\, Y) ) $ \\
	\> $\rightarrow ob\, X\, (\lambda W  \forall Z (\beta\, Z  \rightarrow  Z\, W ) ) ) ) \|^{{H}^{M},g} = T $ \\

	$\Leftrightarrow$ 
	\>  $ {H}^{M} \models^\text{HOL} OB3 $
\end{tabbing}

\vfill

\fbox{\begin{minipage}{.93\textwidth}
		\textbf{Justification *:} 
                \sloppy
                By definition of $\|.\|$,
                $\|\lambda W_{i} \forall Z_{\proptype}
                (\beta_{\proptype \typearrow o}
                Z_{\proptype}\rightarrow Z_{\proptype} W_{i})
                \|^{{H}^{M},g[\bar{\beta} /\beta_{\proptype \typearrow
                    o}] [\bar{X}/X_\proptype][s/Y_{i}]}$ is denoting
                the function $ f $ from $D_{i}$ to $D_{o}$ such that for all $ d
                \in D_{i} $, $
                f(d) = \|\forall Z_{\proptype} (\beta_{\proptype
                  \typearrow o} Z_{\proptype}\rightarrow Z_{\proptype}
                W_{i}) \|^{{H}^{M},g[\bar{\beta} /\beta_{\proptype
                    \typearrow o}]
                  [\bar{X}/X_\proptype][s/Y_{i}][d/W_i]}$. 
                By definition~of $\|.\|$,
                $\|\forall Z_{\proptype} (\beta_{\proptype \typearrow
                  o} Z_{\proptype}\rightarrow Z_{\proptype} W_{i})
                \|^{{H}^{M},g[\bar{\beta} /\beta_{\proptype \typearrow
                    o}] [\bar{X}/X_\proptype][s/Y_{i}][d/W_i]} = T$
                iff
                $\text{ for all } \bar{Z} \in \bar{\beta} \text{ we
                  have } d \in \bar{Z} $.
                Thus,  $f$ is the characteristic function of the set $\cap\bar{\beta}$. By the
                Denotatpflicht, which is obeyed in $H^M$, we know that
                $f (= \cap\bar{\beta}) \in D_\tau$.

\end{minipage}}
\\[.5em]

\noindent OB4: \\[-1.75em]
\begin{tabbing}
	\qquad \= 
	Given an arbitary assignment $ g $, and arbitary  $ \bar{X}, \bar{Y}, \bar{Z} \in D_{\proptype} $ such that\\
	\> $ \|\forall W (Y\, W \rightarrow X\, W) \wedge ob\, X\, Y
        \wedge $ \\
         \> $\;\; \forall W ( X\, W \rightarrow Z\, W) \|^{{H}^{M}, g [\bar{X}/X_\proptype][\bar{Y}/Y_\proptype] [\bar{Z}/Z_\proptype] } = T  $\\
	
	$\Leftrightarrow$
	\>  $\|\forall W (Y\, W \rightarrow X\, W) \|^{{H}^{M}, g
          [\bar{X}/X_\proptype] [\bar{Y}/Y_\proptype] [\bar{Z}/Z_\proptype] } = T $ \quad  and   \\
	\> $ \|ob\, X\, Y\|^{{H}^{M}, g [\bar{X}/X_\proptype] [\bar{Y}/Y_\proptype] [\bar{Z}/Z_\proptype] } = T $ \quad  and \\
	\> $ \|\forall W ( X\, W \rightarrow Z\, W) \|^{{H}^{M}, g [\bar{X}/X_\proptype] [\bar{Y}/Y_\proptype] [\bar{Z}/Z_\proptype] } = T  $\\
	
 
	$\Leftrightarrow$
	\> For all $ s \in D_{i} $ we have \\
	\> ($ s \in \bar{Y} $ implies $ s \in \bar{X} $) and  $ \bar{Y} \in Iob_{\proptype \typearrow \proptype \typearrow o}(\bar{X}) $ and ($ s \in \bar{X} $ implies $ s \in \bar{Z} $)  \\
	
	$\Leftrightarrow$
	\> $ \bar{Y} \subseteq \bar{X} $  and $ \bar{Y} \in Iob_{\proptype \typearrow \proptype \typearrow o}(\bar{X}) $ and  $ \bar{X} \subseteq \bar{Z} $ \\
	
	$\Rightarrow$
	\> $ (\bar{Z} \setminus \bar{X})\cup \bar{Y} \in
        Iob_{\proptype \typearrow \proptype \typearrow o}(\bar{Z}) $
        \quad (by Lemma 1 (ob4)) \\
	
	$\Leftrightarrow$
	\> $ \|ob\, Z\, (\lambda W (( Z\, W \wedge\neg X\, W)\vee Y\,
        W))\|^{{H}^{M}, g [\bar{X}/X_\proptype][\bar{Y}/Y_\proptype]
          [\bar{Z}/Z_\proptype] } = T $  (\textbf{see **}) \\
	\\
	\> Hence by definition of $\|.\|$  for all $ g $, all $\bar{X}, \bar{Y}, \bar{Z} \in D_{\proptype} $ we have  \\
	\> $ \|(\forall W (Y\, W \rightarrow X\, W) \wedge ob\, X\, Y \wedge \forall W ( X\, W \rightarrow Z\, W)  ) $ \\
	 \> $\rightarrow ob\, Z\, (\lambda W (( Z\, W \wedge\neg X\, W)\vee Y\, W))\|^{{H}^{M},g[\bar{X}/X_\proptype][\bar{Y}/Y_\proptype] [\bar{Z}/Z_\proptype] } = T$ \\
	$\Leftrightarrow$ 
	\> For all  $g$ we have \\
	\> $ \|\forall X Y Z ( (\forall W (Y\, W \rightarrow X\, W) \wedge ob\, X\, Y \wedge \forall W ( X\, W \rightarrow Z\, W)  )$ \\
	 \>$\rightarrow ob\, Z\, (\lambda W (( Z\, W \wedge\neg X\, W)\vee Y\, W)))\|^{{H}^{M},g  } = T$\\
	
	$\Leftrightarrow$
	\>  $ {H}^{M} \models^\text{HOL} OB4 $
	
\end{tabbing}

\fbox{\begin{minipage}{.93\textwidth} \textbf{Justification **:}
    Similar to justification *, we can convince ourselves that
    $\|\lambda W (( Z\, W \wedge\neg X\, W)\vee Y\, W)\|^{{H}^{M}, g
      [\bar{X}/X_\proptype] [\bar{Y}/Y_\proptype][\bar{Z}/Z_\proptype]
      [\bar{Z}/Z_\proptype]}$
    is denoting the characteristic function $f$ of the set
    $(\bar{Z} \setminus \bar{X})\cup \bar{Y}$. By the Denotatpflicht,
    which is obeyed in $H^M$, we know that $f (= (\bar{Z} \setminus
    \bar{X})\cup \bar{Y}) \in D_\tau$.
\end{minipage}}
\\
\\

\noindent OB5:  \\[-1.75em]
\begin{tabbing} 
	\qquad \=  This case is analogous to OB4.  
\end{tabbing} \qed
\end{proof}
\begin{lemma}\label{lemma:3} \sloppy
	Let ${H}^{M}$ be a Henkin model for a \cjl\ model $M$. For all \cjl\ formulas $\delta$, arbitrary variable assignments $g$ and worlds $s$ it holds: $M,s \models \delta \text{\ if and only if\ } \| \lfloor \delta \rfloor\, S_\itype\|^{{H}^{M}, g 
	[s/S_\itype]}=T$.
\end{lemma}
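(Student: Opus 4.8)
The statement is a semantic correspondence lemma relating satisfaction in the \cjl\ model $M$ to denotation in the associated Henkin model $H^M$. The natural approach is \emph{structural induction on the \cjl\ formula $\delta$}. The claim to be proved is that for every $\delta$, every assignment $g$, and every world $s$, we have $M,s \models \delta$ iff $\|\lfloor\delta\rfloor\, S_\itype\|^{H^M,g[s/S_\itype]} = T$. A preliminary observation that streamlines everything is that $\lfloor\delta\rfloor$ is a term of type $\proptype = \itype\typearrow o$, so $(\lfloor\delta\rfloor\, S_\itype)$ has type $o$, and by the definition of $\|\cdot\|$ on applications, its denotation at $g[s/S_\itype]$ is exactly the result of applying the function $\|\lfloor\delta\rfloor\|^{H^M,g}$ to the world $s$. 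Thus throughout I will silently identify $\|\lfloor\delta\rfloor\|^{H^M,g}$ with the set $\{s \mid M,s\models\delta\} = V(\delta)$, which is precisely what the induction establishes.

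\emph{Base case.} For $\delta = p^i$ atomic, $\lfloor p^i\rfloor = p^i_\proptype$, and by clause 1 of Definition~\ref{def:hm} the interpretation $Ip^i_\proptype(s) = T$ iff $s\in V(p^i)$ in $M$, which is exactly $M,s\models p^i$. \emph{Inductive step.} I would dispatch the Boolean cases $\neg s$ and $s\vee t$ first: here I unfold the abbreviations $\neg_\proptype$ and $\vee_{\proptype\typearrow\proptype\typearrow\proptype}$, $\beta$-reduce the application to the world $s$, apply the Corollary (clauses 1 and 2) to push the HOL connective denotations through, and then invoke the induction hypothesis on the immediate subformulas. The modal cases $\Box$, $\Box_a$, $\Box_p$ proceed similarly: after $\beta$-reduction one obtains a universally quantified HOL formula, which by clause 7 of the Corollary holds iff the body holds for all worlds $d\in D_\itype = S$; combining this with the induction hypothesis and the interpretations of $av$, $pv$ (clauses 2, 3 of Definition~\ref{def:hm}) matches the \cjl\ truth conditions $V(\varphi)=S$, $av(s)\subseteq V(\varphi)$, $pv(s)\subseteq V(\varphi)$ respectively.

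\emph{Deontic cases.} For $\bigcirc(t/s)$, after $\beta$-reduction the denotation of $(\lfloor\bigcirc(t/s)\rfloor\,S_\itype)$ is $ob\,\lfloor s\rfloor\,\lfloor t\rfloor$ evaluated in $H^M$; by clause 4 of Definition~\ref{def:hm} this equals $T$ iff $\|\lfloor t\rfloor\| \in ob(\|\lfloor s\rfloor\|)$, and by the induction hypothesis (in its set form) this is $V(t)\in ob(V(s))$, matching the \cjl\ condition. The cases $\bigcirc_a$ and $\bigcirc_p$ additionally require unfolding the conjunction with the existential $\exists Y(av\,X\,Y\wedge\neg(A\,Y))$, using Corollary clauses 3 and 8 together with the induction hypothesis to recover $av(s)\cap V(\neg\varphi)\neq\emptyset$. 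The one point deserving care — and the place I expect the main obstacle — is ensuring that the set-denotations appearing as arguments to $ob$ (and as the quantification domains) are genuinely available in the appropriate Henkin domains; this is exactly the role of the Denotatpflicht, as already exploited in Justifications $*$ and $**$ of the previous lemma. Since $\lfloor s\rfloor$ and $\lfloor t\rfloor$ are closed terms of type $\proptype$ (or have their free variables fixed by $g$), their denotations lie in $D_\proptype$ by totality of $\|\cdot\|^{H^M}$, so $ob$ can be applied to them legitimately, and the induction goes through cleanly. Because the assignment $g$ only ever enters through the fixed slot $S_\itype$ and the world argument is the only thing that matters, the statement's universal quantification over $g$ causes no difficulty.
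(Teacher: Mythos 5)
Your proposal is correct and follows essentially the same route as the paper: structural induction on $\delta$, unfolding the embedded connectives and $\beta$-reducing, then matching against the interpretation clauses of Definition~\ref{def:hm}, with the ``set form'' of the induction hypothesis (identifying $\|\lfloor\varphi\rfloor\|^{H^M,g}$ with $V(\varphi)$ via extensionality and the Denotatpflicht) playing exactly the role of the paper's Justification~$\ast\ast\ast$. Nothing essential is missing.
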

\begin{proof}
The proof of the lemma  is by induction on the structure of~$\delta$. 

\noindent In the base case we have $\delta = p^{j}$ for some $p^{j}\in P$: 
\begin{tabbing}
	\qquad \= $\| \lfloor p^{j} \rfloor S\|^{{H}^{M}, g [s/S_\itype] }=T$ \\
	$\Leftrightarrow$ 
	\> $ \| p^{j}_\proptype S\|^{{H}^{M}, g [s/S_\itype] } = T  $\\
	
	$\Leftrightarrow$ 
	\>  $ I p^{j}_\proptype (s) = T $  \\
	
	$\Leftrightarrow$ 
	\>  $  s \in V(p^{j}) $ \quad (by definition of $ {H}^{M} $)\\
	
	$\Leftrightarrow$ 
	\> $ M,s \vDash p^{j} $ 
\end{tabbing}

For proving the inductive cases we apply the induction hypothesis,
which is formulated as follows: For all $\delta'$ that are structurally smaller than
$\delta$, for all assignment $g$ and all $s$ we have 
			$ \| \lfloor \delta' \rfloor S \|^{{H}^{M}, g [s/S_\itype] } = T
			$ if and only if 
			$ M,s \vDash \delta' $. 

We consider each inductive case in turn: \\
\noindent $\delta = \neg \varphi$: \\[-1.5em]
\begin{tabbing}
	\qquad \= 
	$ \| \lfloor \neg \varphi\rfloor S\|^{{H}^{M}, g[s/S_\itype] } = T  $\\
	$\Leftrightarrow$ 
	\> $ \|(\neg_{\proptype \typearrow \proptype} \lfloor
        \varphi\rfloor) S\|^{{H}^{M}, g[s/S_\itype]} = T$ \\
	$\Leftrightarrow$ 
	\>  $ \|\neg (\lfloor \varphi\rfloor
        S)\|^{{H}^{M},g[s/S_\itype]} = T \qquad (\text{since }(\neg_{\proptype \typearrow \proptype} \lfloor
        \varphi\rfloor) S =_{\beta\eta}  \neg (\lfloor \varphi\rfloor
        S))$\\
	
	$\Leftrightarrow$ 
	\>  $ \| \lfloor \varphi\rfloor S\|^{{H}^{M}, g[s/S_\itype]} = F $ \\
	
	$\Leftrightarrow$ 
	\> $ M,s \nvDash \varphi $ \quad (by induction hypothesis)\\
	
	$\Leftrightarrow$ 
	\> $ M,s \vDash \neg \varphi $ 
\end{tabbing}	

\noindent $\delta = \varphi\vee\psi$: \\[-1.5em]
\begin{tabbing}
	\qquad \= 
	$ \| \lfloor \varphi\vee\psi \rfloor S\|^{{H}^{M},g[s/S_\itype] } = T $\\
	
	$\Leftrightarrow$ 
	\> $ \|(\lfloor \varphi\rfloor \vee_{\proptype \typearrow \proptype \typearrow \proptype} \lfloor \psi \rfloor)  S\|^{{H}^{M}, g[s/S_\itype]} = T  $\\
	
	$\Leftrightarrow$ 
	\>  $ \|(\lfloor \varphi \rfloor S ) \vee ( \lfloor \psi
        \rfloor  S)\|^{{H}^{M},g[s/S_\itype]} = T$ \\
        \> \qquad \qquad \qquad \qquad $(\text{since } (\lfloor \varphi\rfloor \vee_{\proptype \typearrow \proptype \typearrow \proptype} \lfloor \psi \rfloor) S =_{\beta\eta}  ((\lfloor \varphi \rfloor S) \vee ( \lfloor \psi \rfloor S ))) $  \\
	
	$\Leftrightarrow$ 
	\>  $ \|\lfloor \varphi \rfloor S\|^{{H}^{M}, g[s/S_\itype]} = T $ or 
$ 	\| \lfloor \psi \rfloor  S)\|^{{H}^{M},g[s/S_\itype]} = T $  \\
	
	$\Leftrightarrow$ 
	\> $  M,s \vDash \varphi $ or $ M,s \vDash \psi $ \quad (by induction hypothesis)\\
	
	$\Leftrightarrow$ 
	\> $ M,s \vDash \varphi  \vee \psi $ 
\end{tabbing}

\noindent $\delta = \Box \varphi$:  \\[-1.5em]
\begin{tabbing}
	\qquad \= 
	$ \| \lfloor \Box \varphi \rfloor S\|^{{H}^{M}, g  [s/S_\itype] } = T $\\
	
	$\Leftrightarrow$ 
	\> $ \| ( \lambda X \forall Y (\lfloor \varphi \rfloor  Y ) ) S \|^{{H}^{M},g[s/S_\itype]} = T$\\
	
	$\Leftrightarrow$ 
	\> For all  $ a \in D_{i} $ we have $ \| \lfloor\varphi \rfloor Y\|^{{H}^{M},g[s/S_\itype][a/Y_\itype]} = T $ \\

	$\Leftrightarrow$ 
	\> For all  $ a \in D_{i} $ we have $ \| \lfloor\varphi
        \rfloor Y\|^{{H}^{M},g[a/Y_\itype]} = T $ \quad ($S\notin free(\lfloor\varphi
        \rfloor)$)\\

	$\Leftrightarrow$ 
	\> For all $a \in S$ we have $ M,a\models\varphi $ \quad (by induction hypothesis) \\
	
	$\Leftrightarrow$ 
	\> $M,s\models\Box\varphi $ 
\end{tabbing}	
	
\noindent $\delta = \Box_{a} \varphi$: \\[-1.5em]
\begin{tabbing}
	\qquad \= 
	$ \| \lfloor \Box_{a} \varphi \rfloor S\|^{{H}^{M},g[s/S_\itype]} = T $\\
	
	$\Leftrightarrow$ 
	\> $ \| \left(\lambda X \forall Y (\neg av\, X\,  Y  \vee \lfloor\varphi \rfloor  Y ) \right) S \|^{{H}^{M},g[s/S_\itype]} = T$\\
	
	$\Leftrightarrow$ 
	\> For all $  a \in D_{i} $ we have $ \|  \neg av\, S\, Y \vee \lfloor\varphi \rfloor Y\|^{{H}^{M},g[s/S_\itype][a/Y_\itype]} = T $ \\
	
	$\Leftrightarrow$ 
	\> For all $ a \in D_{i} $ we have $ \| av\, S\, Y
        \|^{{H}^{M}, g  [s/S][a/Y] } = F $ or \\
        \>  $ \| \lfloor\varphi \rfloor Y\|^{{H}^{M},g[s/S_\itype][a/Y_\itype]} = T $ \\
	
	$\Leftrightarrow$ 
	\> For all  $ a \in D_{i} $ we have $
        Iav_{i\typearrow\proptype}(s,a) = F $ or \\
        \> $ \| \lfloor\varphi \rfloor
        Y\|^{{H}^{M},g[a/Y_\itype]} = T$ \qquad ($S\notin
        free(\lfloor\varphi \rfloor)$)\\
	
	$\Leftrightarrow$ 
	\> For all  $ a \in S $ we have $ a\not\in av(s) $ or \\
        \> $ M,a\models\varphi $ \qquad (by induction hypothesis) \\
	
	$\Leftrightarrow$ 
	\> $ M,s\models\Box_{a} \varphi $
	
\end{tabbing}	

\noindent $\delta = \Box_{p} \varphi$. \\[-1.5em]
\begin{tabbing}
	\qquad \=  The argument is analogous to $\delta = \Box_{a} \varphi$. 
\end{tabbing}	

\noindent $\delta =\bigcirc (\psi/\varphi)$: \\[-1.5em]
\begin{tabbing}
	\qquad \= 
	$ \| \lfloor\bigcirc (\psi/\varphi)  \rfloor S\|^{{H}^{M},g[s/S_\itype]} = T $\\
	
	$\Leftrightarrow$ 
	\>  $ \|  (\lambda X  (ob \lfloor \psi  \rfloor   \lfloor \varphi  \rfloor )) S \|^{{H}^{M},g[s/S_\itype]} = T $\\
	
	$\Leftrightarrow$ 
	\>   $ \| ob  \lfloor \psi \rfloor  \lfloor \varphi  \rfloor   \|^{{H}^{M},g[s/S_\itype]} = T $ \\
	
	$\Leftrightarrow$ 
	\>  $  Iob_{\proptype \typearrow \proptype \typearrow o} ( \| \lfloor \psi  \rfloor   \|^{{H}^{M},g[s/S_\itype]}) (\| \lfloor \varphi  \rfloor\|^{{H}^{M},g[s/S_\itype]}) = T $ \\
	
	$\Leftrightarrow$ 
	\> $ \| \lfloor \varphi  \rfloor \|^{{H}^{M}, g[s/S_\itype]} \in Iob_{\proptype \typearrow \proptype \typearrow o}(\| \lfloor\psi\rfloor \|^{{H}^{M},g[s/S_\itype]}) $\\
	
	$\Leftrightarrow$ 
	\> $ V(\varphi) \in Iob_{\proptype \typearrow \proptype \typearrow o}(V(\psi))$ \qquad
	(\textbf{see ***})  \\
	
	$\Leftrightarrow$ 
	\> $V(\varphi) \in ob(V(\psi)) $\\
	
	$\Leftrightarrow$ 
	\> $M,s\models\bigcirc (\psi/\varphi)$
	
\end{tabbing}

\fbox{\begin{minipage}{.93\textwidth}
			\textbf{Justification ***:} We need to show
                        that 
			$\| \lfloor \varphi \rfloor \|^{{H}^{M}, g [s/S_\itype] }$ is identified with
			$ V(\varphi) = \{s \in S\, |\, M, s
                          \models \varphi\}$ (analogous for $\psi$). By induction hypothesis, for
			all assignment $g$ and world $s$, we have
			$ \| \lfloor \varphi \rfloor S \|^{{H}^{M}, g [s/S_\itype] } = T $ if and only if
			$ M,s \vDash \varphi $.  We expand the details
                        of this equivalence. For  all assignment
			$g$ and all worlds $s\in D_i$ we have
			\begin{tabbing}
			\qquad  \= $ s \in \|
                        \lfloor \varphi \rfloor
                        \|^{{H}^{M},g[s/S_\itype]} $  \quad
                        ({charact.~functions are associated with sets})\\
			$\Leftrightarrow$ \> $\| \lfloor \varphi \rfloor \|^{{H}^{M},g[s/S_\itype]}(s) = T$ \\
			$\Leftrightarrow$ \> $\|\lfloor \varphi \rfloor
                        \|^{{H}^{M}, g [s/S_\itype]}  (\|S \|^{{H}, g
                          [s/S_\itype]} ) = T $ \\
			$\Leftrightarrow$ \> $\| \lfloor \varphi \rfloor S \|^{{H}^{M}, g [s/S_\itype]} = T$ \\
			$\Leftrightarrow$ \> $M,s \vDash \varphi$
                        \qquad (induction hypothesis)\\
			$\Leftrightarrow$ \> $s\in V(\varphi) $
			\end{tabbing}
			Hence, $s \in \| \lfloor \varphi \rfloor \|^{{H}^{M},g[s/S_\itype]}$ if and only
			if  $s\in V(\varphi)$. By extensionality we thus know that $\| \lfloor
			\varphi \rfloor \|^{{H}^{M},g[s/S_\itype]} =
                        V(\varphi)$. Moreover, since $H^M$ obeys the
                        Denotatpflicht we know that $V(\varphi) \in D_\tau$.
\end{minipage}}
\\[1em]

\noindent $\delta = \bigcirc_{a} (\varphi)$: \\[-1.5em]
\begin{tabbing}
	\qquad \= 
	$ \| \lfloor \bigcirc_{a} (\varphi) \rfloor S \|^{{H}^{M},g[s/S_\itype]} = T  $\\
	
	$\Leftrightarrow$ 
	\>  $ \| (\lambda X ( ob\, (av\, X) \lfloor \varphi  \rfloor
        \wedge \exists Y (av\,X \, Y \wedge \neg (\lfloor \varphi
        \rfloor Y) ) ) S \|^{{H}^{M}, g [s/S_\itype] } = T  $\\

	$\Leftrightarrow$ 
	\>  $ \| ob\, (av\, S) \lfloor \varphi  \rfloor \wedge \exists Y (av\,S\, Y \wedge \neg (\lfloor \varphi  \rfloor Y) ) \|^{{H}^{M}, g [s/S_\itype] } = T  $\\
	
	$\Leftrightarrow$ 
	\> $  \| ob\, (av\, S) \lfloor \varphi  \rfloor
        \|^{{H}^{M}, g [s/S_\itype] } = T $ \quad and  \\
        \> $ \| \exists Y (av\,S \, Y \wedge \neg (\lfloor \varphi  \rfloor Y) )\|^{{H}^{M},g[s/S_\itype]} = T $ \\
	
	$\Leftrightarrow$ 
	\> $ \| ob\, (av\, S) \lfloor \varphi  \rfloor \|^{{H}^{M}, g [s/S_\itype] } = T $ \quad and \\
	\> there exists $ a \in D_{i} $ such that $ \| av\,S \, Y \wedge \neg (\lfloor \varphi  \rfloor Y) \|^{{H}^{M},g[s/S_\itype][a/Y_\itype]} = T $\\
	
	$\Leftrightarrow$ 
	\> $ Iob_{\proptype \typearrow \proptype \typearrow o} (\|av\,
        S\|^{{H}^{M}, g [s/S_\itype] })(\|
        \lfloor \varphi \|^{{H}^{M},g[s/S_\itype]}) = T  $  \quad and \\
	 \> there exists $ a \in D_{i} $ such that \\
         \> $ \|av\,X \, Y  \|^{{H}^{M},g[s/S_\itype][a/Y_\itype]} = T
         $ and $ \| \lfloor \varphi  \rfloor Y
         \|^{{H}^{M},g[s/S_\itype][a/Y_\itype]} = F $  \\

	$\Leftrightarrow$ 
	\> $ \|
        \lfloor \varphi \|^{{H}^{M},g[s/S_\itype]}
        \in  Iob_{\proptype \typearrow \proptype \typearrow o} (\|av\,
        S\|^{{H}^{M}, g [s/S_\itype] })$  \quad and \\
	 \> there exists $ a \in D_{i} $ such that \\
         \> $ \|av\,X \, Y  \|^{{H}^{M},g[s/S_\itype][a/Y_\itype]} = T
         $ and $ \| \lfloor \varphi  \rfloor Y
         \|^{{H}^{M},g[s/S_\itype][a/Y_\itype]} = F $  \\

	$\Leftrightarrow$ 
	\> $ V(\varphi) 
        \in  Iob_{\proptype \typearrow \proptype \typearrow o} (\|av\,
        S\|^{{H}^{M}, g [s/S_\itype] })$  \quad and \qquad \textbf{(similar to
        ***)}\\
	 \> there exists $ a \in D_{i} $ such that \\
         \> $ \|av\,X \, Y  \|^{{H}^{M},g[a/Y_\itype]} = T
         $ and $ \| \lfloor \varphi  \rfloor Y
         \|^{{H}^{M},g[a/Y_\itype]} = F $  \\

	$\Leftrightarrow$ 
	\> $ V(\varphi) 
        \in  Iob_{\proptype \typearrow \proptype \typearrow o} (av(s))$  \quad and \qquad \textbf{(similar to
        ***)}\\
	 \> there exists $ a \in D_{i} $ such that \\
         \> $ \|av\,X \, Y  \|^{{H}^{M},g[a/Y_\itype]} = T
         $ and $ \| \lfloor \varphi  \rfloor Y
         \|^{{H}^{M},g[a/Y_\itype]} = F $  \quad ($S\notin
        free(\lfloor\varphi \rfloor)$)\\

	$\Leftrightarrow$ 
	\> $ V(\varphi) 
        \in  ob(av(s))$  \quad and \\
	 \> there exists $ a \in S $ such that \\
        \> $ a \in av(s) $ and $M,a \not\models \varphi$ \quad (by
        induction hypothesis)\\

	$\Leftrightarrow$ 
	\> $ V(\varphi) 
        \in  ob(av(s))$  \quad and \\
	 \> there exists $ a \in S $ such that $ a \in av(s) $ and $ a\notin V(\varphi) $ \\

	$\Leftrightarrow$ 
	\> $ V(\varphi) \in ob(av(s)) $  \quad and \\
	\> there exists $ a \in S $ such that $ a\in  av(s)\cap V(\neg \varphi) $\\
	
	$\Leftrightarrow$ 
	\>  $ V(\varphi) \in ob(av(s)) $ and $ av(s)\cap V(\neg \varphi) \neq \emptyset $\\
	
	$\Leftrightarrow$ 
	\> $M,s\models \bigcirc_{a} (\varphi)$
\end{tabbing}

\noindent $\delta = \bigcirc_{p} (\varphi)$: \\[-1.5em]
\begin{tabbing}
	\qquad \=  The argument is analogous to $\delta = \bigcirc_{a} (\varphi)$. \qed
\end{tabbing}	
\end{proof}

\begin{lemma} \label{lemma4}
	For every Henkin model
	${H} = \langle \{D_\alpha\}_{\alpha \in {T}}, I \rangle$ such that
	${H}\models^\text{HOL} \Sigma $ for all 
	$ \Sigma\in \{\text{AV, PV1, PV2, OB1,..., OB5}\}$, there exists a
	corresponding 
	\cjl\ model $M$. Corresponding means that for all \cjl\ formulas $\delta $ and for all assignment $g$ and worlds $s$, 
	$ \| \lfloor \delta \rfloor S \|^{{H}, g [s/S] } = T
	$ if and only if 
	$ M,s \vDash \delta $.
\end{lemma}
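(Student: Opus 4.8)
The plan is to invert the construction of Definition~\ref{def:hm}: instead of building a Henkin model from a \cjl\ model, I read a \cjl\ model $M = \langle S, av, pv, ob, V\rangle$ off the given $H = \langle\{D_\alpha\}, I\rangle$. I set $S := D_\itype$ (nonempty, since frame domains are), and, identifying elements of $D_\proptype$ with their characteristic subsets of $S$, put $V(p^j) := \{s \in S \mid I(p^j_\proptype)(s)=T\}$, $av(s) := \{u \in S \mid I(av)(s)(u)=T\}$, $pv(s) := \{u \in S \mid I(pv)(s)(u)=T\}$, and $ob(\bar X) := \{\bar Y \mid I(ob)(\bar X)(\bar Y)=T\}$ for $\bar X \in D_\proptype$.

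First I would verify the frame conditions on $av$ and $pv$. Each is the exact converse of the matching step in Lemma~2: $av(s)\neq\emptyset$ comes from $H\models^\text{HOL}AV$, $av(s)\subseteq pv(s)$ from $PV1$, and $s\in pv(s)$ from $PV2$, in each case by unfolding the denotation of the universally quantified HOL sentence and rereading it over $S = D_\itype$. The five $ob$-conditions, \emph{restricted to contexts and propositions in $D_\proptype$}, likewise follow by reading the equivalences in the proof of Lemma~2 ($OB1$--$OB5$) in the reverse direction. Here I rely on the Denotatpflicht to guarantee that $D_\proptype$ is a Boolean subalgebra of $P(S)$: the $\lambda$-terms for $\bot,\top,\neg,\vee$ all denote, so $\emptyset, S \in D_\proptype$ and $D_\proptype$ is closed under $\cap,\cup,\smallsetminus$, whence the witnessing sets occurring in $OB3$ and $OB4$ (namely $\cap\bar\beta$ and $(\bar Z\smallsetminus\bar X)\cup\bar Y$) land in $D_\proptype$ whenever their constituents do.

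With $M$ fixed, the correspondence is proved by induction on $\delta$, following Lemma~\ref{lemma:3} almost verbatim. The base case and the cases for $\neg,\vee,\Box,\Box_a,\Box_p,\bigcirc(\cdot/\cdot),\bigcirc_a,\bigcirc_p$ only ever evaluate $ob$, $av$ and $pv$ at definable arguments of the form $\|\lfloor\varphi\rfloor\|^{H,g}$; these equal $V(\varphi)$ by the reasoning of Justification~***, and they lie in $D_\proptype$ by the Denotatpflicht. Hence each chain of equivalences transfers unchanged, once we know that $ob$ agrees with $I(ob)$ on $D_\proptype$.

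The main obstacle is that a genuine \cjl\ model must satisfy (ob1)--(ob5) for \emph{arbitrary} subsets $\bar X,\bar Y,\bar Z\subseteq S$ (and (ob3) for arbitrary families $\bar\beta$), whereas $OB1$--$OB5$, being quantified over HOL variables of type $\proptype$ and $\proptype\typearrow o$, only control the definable subalgebra $D_\proptype$ and the definable families $D_{\proptype\typearrow o}$, which in a Henkin model can be proper. The naive stipulation $ob(\bar X):=\emptyset$ off $D_\proptype$ fails, since (ob4) forces $(\bar Z\smallsetminus\bar X)\cup\bar Y$ into $ob(\bar Z)$ for definable $\bar X\subseteq\bar Z$ with $\bar Y\in ob(\bar X)$ even when $\bar Z\notin D_\proptype$, and (ob3) similarly forces possibly non-definable intersections $\cap\bar\beta$ into $ob(\bar X)$. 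The real work is therefore to extend $ob$ from the definable fragment to all of $P(S)$ so as to (i) keep agreement with $I(ob)$ on $D_\proptype$ and (ii) validate all five conditions globally. I would attempt this by making membership $\bar Y\in ob(\bar X)$ depend only on the trace $\bar Y\cap\bar X$ (as (ob2) demands) and closing under the transfers dictated by (ob3)--(ob5) from the definable contexts, then checking coherence case by case. Since the correspondence queries $ob$ only at definable arguments, any such extension meeting (i) and (ii) suffices; exhibiting one and verifying its coherence is the crux of the proof.
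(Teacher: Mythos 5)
Your construction of $M$ from $H$ and your reduction of the correspondence claim to the induction of Lemma~\ref{lemma:3} are exactly what the paper does: it sets $S=D_\itype$, reads $V$, $av$, $pv$, $ob$ off $I$, observes that $H$ is then literally a model $H^M$ in the sense of Definition~\ref{def:hm} (for which agreement on $D_\proptype$ suffices), and invokes Lemma~\ref{lemma:3}. Where you diverge is in the verification that $M$ is a genuine \cjl\ model: the paper disposes of this with the single sentence that it is ``straightforward (but tedious)'' to check the conditions from AV, PV1, PV2, OB1--OB5, defining $ob$ on all of $D_\itype\longrightarrow D_o$ directly via $Iob$ --- which tacitly presupposes that $Iob$ is defined on arbitrary subsets of $S$, i.e.\ it simply does not engage with the possibility that $D_\proptype\subsetneq P(S)$ in a Henkin model. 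You have correctly isolated the point that this sentence hides: OB1--OB5 only constrain $ob$ on the (Boolean subalgebra) $D_\proptype$ and on families in $D_{\proptype\typearrow o}$, whereas conditions 1--5 on a \cjl\ model quantify over all of $P(S)$, and the naive extensions (e.g.\ $ob(\bar X)=\emptyset$ off $D_\proptype$) visibly violate conditions 4 and 5.

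The problem is that your proposal stops exactly there. You state that ``the real work is therefore to extend $ob$ from the definable fragment to all of $P(S)$'' and that ``exhibiting one and verifying its coherence is the crux of the proof'' --- but no extension is exhibited and no coherence check is carried out, so the lemma as stated (existence of a bona fide \cjl\ model) is not established by your argument. To close it you would need either (i) a concrete definition of $ob(\bar X)$ for arbitrary $\bar X\subseteq S$ (for instance, one generated by closing the definable $ob$ under the transfers forced by conditions 2--5 and then proving that this closure never forces $\emptyset\in ob(\bar X)$, which is delicate because condition 2 is a biconditional, not a monotone closure rule), or (ii) an argument that $H$ can be replaced by a Henkin model with full $D_\proptype$ and $D_{\proptype\typearrow o}$ that still satisfies AV, PV1, PV2, OB1--OB5 and falsifies $\text{vld}\,\lfloor\varphi\rfloor$. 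Neither is routine, and neither is in your write-up. So: your diagnosis is sharper than the paper's own treatment of this step, but as a proof the proposal has a genuine gap at precisely the point you yourself label the crux.
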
 

\begin{proof} 
	Suppose that
	${H} = \langle \{D_\alpha\}_{\alpha \in {T}}, I \rangle$ is a
	Henkin model such that
	${H} \models^\text{HOL} \Sigma$ for all $\Sigma\in\{\text{AV, PV1, PV2,
		OB1,..,OB5}\} $. Without loss of generality, we can
              assume that the domains of $H$ are denumerable \cite{Henkin50}.
	We construct the corresponding \cjl\ model $ M $ as follows:

	\begin{itemize}[topsep=1pt,itemsep=0ex,partopsep=1ex,parsep=1ex]
		\item
		$ S= D_{i} $.  
		\item
		$ s \in av(u) $ for $ s,u \in S $ iff $Iav_{i\typearrow\proptype}(s,u) = T$.
		\item
		$ s \in pv(u) $ for $ s,u \in S $ iff $Ipv_{i\typearrow\proptype}(s,u) = T$.
		\item $\bar{X}\in ob(\bar{Y}) $ for
                $\bar{X}, \bar{Y} \in D_i \longrightarrow  D_o$ iff
		$Iob_{\proptype \typearrow \proptype \typearrow o}
		(\bar{X},\bar{Y}) = T $.  
		\item
		$ s \in V(p^{j}) $ iff $Ip^{j}_\proptype(s) = T $ for all $p^{j}$.
	\end{itemize}

	Since ${H} \models^\text{HOL} \Sigma $ for all
	$ \Sigma \in \{\text{AV, PV1, PV2, OB1, .., OB5}\} $, it is
        straightforward (but tedious) to verify that $av$, $pv$ and $ob$ satisfy the 
	conditions as required for a \cjl\ model.

        Moreover, the above construction ensures that $H$ is a Henkin
        model $H^M$ for \cjl\ model $M$. Hence, Lemma 3 applies. This
        ensures that for all \cjl\ formulas $\delta $, for all
        assignment $g$ and all worlds $s$ we have
	$ \| \lfloor \delta \rfloor S \|^{{H}, g [s/S] } = T
	$ if and only if 
	$ M,s \vDash \delta $. \qed
\end{proof}

\begin{theorem}[Soundness and Completeness of the Embedding]
	\[\models^\text{\cjl} \varphi \text{ if and only if }
        \{\text{AV, PV1, PV2, OB1,..,OB5}\} \models^\text{HOL}
        \text{vld}\, \lfloor \varphi \rfloor\] 
\end{theorem}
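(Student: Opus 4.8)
The plan is to prove the two directions of the biconditional separately, with both directions resting on a single observation about $\text{vld}$. Recall that $\text{vld} = \lambda A_\proptype \forall S_\itype (A\, S)$, so for any Henkin model $H$ and assignment $g$ we have, by $\beta$-conversion and the semantics of the universal quantifier, that $\|\text{vld}\, \lfloor \varphi \rfloor\|^{H,g} = T$ holds if and only if $\|\lfloor \varphi \rfloor\, S\|^{H, g[s/S_\itype]} = T$ for every $s \in D_\itype$. Since $S$ does not occur free in $\lfloor \varphi \rfloor$, the assignment $g$ is immaterial here. This reduces the global HOL-validity of $\text{vld}\, \lfloor \varphi \rfloor$ to the world-wise truth of $\lfloor \varphi \rfloor$, which is exactly the form addressed by the correspondence lemmas.

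For the direction $\models^{\cjl} \varphi \Rightarrow \{\text{AV},\ldots,\text{OB5}\} \models^\text{HOL} \text{vld}\, \lfloor \varphi \rfloor$, I would fix an arbitrary Henkin model $H$ with $H \models^\text{HOL} \Sigma$ for every $\Sigma \in \{\text{AV, PV1, PV2, OB1},\ldots,\text{OB5}\}$. By Lemma~\ref{lemma4} there is a corresponding $\cjl$ model $M$ with $S = D_\itype$ satisfying $\|\lfloor \delta \rfloor\, S\|^{H, g[s/S]} = T$ iff $M,s \models \delta$ for all $\delta$, $g$, $s$. As $\varphi$ is $\cjl$-valid, in particular $M \models^{\cjl} \varphi$, i.e.\ $M,s \models \varphi$ for every $s \in S = D_\itype$. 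Combining this with the correspondence and the unfolding of $\text{vld}$ above yields $\|\text{vld}\, \lfloor \varphi \rfloor\|^{H,g} = T$ for every $g$, hence $H \models^\text{HOL} \text{vld}\, \lfloor \varphi \rfloor$. Since $H$ ranged over all models of the axiom set, the entailment follows.

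For the converse, I would fix an arbitrary $\cjl$ model $M$ and form the associated Henkin model $H^M$ of Definition~\ref{def:hm}. Lemma~2 guarantees $H^M \models^\text{HOL} \Sigma$ for every $\Sigma \in \{\text{AV},\ldots,\text{OB5}\}$, so the hypothesis $\{\text{AV},\ldots,\text{OB5}\} \models^\text{HOL} \text{vld}\, \lfloor \varphi \rfloor$ forces $H^M \models^\text{HOL} \text{vld}\, \lfloor \varphi \rfloor$. Unfolding $\text{vld}$ as before, this means $\|\lfloor \varphi \rfloor\, S\|^{H^M, g[s/S]} = T$ for every $s \in D_\itype = S$, and Lemma~\ref{lemma:3} translates this into $M,s \models \varphi$ for every world $s$, i.e.\ $M \models^{\cjl} \varphi$. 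As $M$ was arbitrary, $\models^{\cjl} \varphi$.

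The substantive content has already been discharged in Lemmas~2, \ref{lemma:3} and~\ref{lemma4}, so at the level of the theorem only bookkeeping remains. I expect the one genuine point requiring care to be the alignment of the two universal quantifications: the HOL notion of validity quantifies over all assignments $g$, whereas $\cjl$-validity quantifies over all worlds $s$. The definition of $\text{vld}$ is precisely what converts an outer $\forall S_\itype$ into a quantifier ranging over $D_\itype = S$, and the fact that $S \notin \free(\lfloor \varphi \rfloor)$ is what makes the residual dependence on $g$ vanish; keeping these two quantifier ranges synchronized is the only place where a slip could occur.
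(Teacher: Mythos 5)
Your proof is correct and follows essentially the same route as the paper: the left-to-right direction rests on Lemma~\ref{lemma4} and the right-to-left direction on Lemma~2 together with Lemma~\ref{lemma:3}, with the unfolding of $\text{vld}$ bridging HOL validity and world-wise truth. The only cosmetic difference is that the paper argues both directions by contraposition (exhibiting a countermodel on each side) whereas you argue them directly, which changes nothing of substance.
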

     \begin{proof} (Soundness, $\leftarrow$) \sloppy The proof is by
		contraposition. Assume $\not\models^{\cjl} \varphi$,
                that is, there is
		a \cjl\ model $M = \langle S,av ,pv,ob,V \rangle$, and
                world $s\in S$, such that $M,s\not\models
                \varphi$. Now let $H^M$ be a Henkin model for \cjl\
                model $M$.
By Lemma \ref{lemma:3}, for an arbitrary assignment $g$, it holds that
		$\|\lfloor \varphi \rfloor\,S_\itype\|^{{H}^{M}, g 
			[s/S_\itype]} = F$.
                Thus, by
		definition of $\|.\|$, it holds that
		$\|\forall S_\itype (\lfloor
		\varphi\rfloor\,S_\itype)\|^{{H}^{M},g} =
		\|\text{vld}\,\lfloor \varphi \rfloor\|^{{H}^{M}, g
			} =F$.
		Hence,
		${H}^M\not\models^\text{HOL} \text{vld}\,\lfloor \varphi \rfloor$. Furthermore,
		${H}^{M} \models^\text{HOL} \Sigma $ for all $ \Sigma
                \in \{\text{AV, PV1, PV2, OB1,\ldots,OB5}\}$ by Lemma
                2. Thus,
		$\{\text{AV, PV1, PV2, OB1,..,OB5}\}\not\models^\text{HOL} \text{vld}\,\lfloor
		\varphi \rfloor$.
		
		(Completeness, $\rightarrow$) The proof is again by
		contraposition. Assume\\
		$\{\text{AV, PV1, PV2, OB1,..,OB5}\}\not\models^\text{HOL} \text{vld}\, \lfloor
		\varphi \rfloor$, that is, there is a Henkin model
		${H}=\langle \{D_\alpha\}_{\alpha \in {T}}, I \rangle$ such that
		$H\models^\text{HOL} \Sigma $ for all $\Sigma \in
                \{\text{AV, PV1, PV2, OB1,..,OB5}\}$, but
		$\|\text{vld}\, \lfloor \varphi \rfloor\|^{{H},g} = F$
                for some
		assignment $g$ .
		By Lemma 4, there is a \cjl\ model $M$ such that $ M
                \nvDash \varphi $. Hence, $\not\models^{\cjl}
                \varphi$. \qed

	\end{proof}

Theorem 1 characterises \cjl\ as a natural fragment of HOL.

\section{Conclusion} \label{sec:conclusion} 
A shallow semantical embedding of Carmo and Jones's logic of contrary-to-duty conditionals 
in classical higher-order logic has been
presented, and shown to be faithfull (sound an complete). In addition, it
has meanwhile been implemented in the proof assistant Isabelle/HOL
(see the appendix).  This
implementation constitutes the first theorem prover for the logic  by
Carmo and Jones that is available to date. The foundational theory for
this implementation has been laid in this article.

There is
much room for future work. First, experiments could investigate
whether the provided implementation already supports non-trivial
applications in practical normative reasoning, or whether further
emendations and improvements are required. Second, the introduced
framework could also be used to systematically analyse the properties of
 Carmo and Jones's dyadic deontic logic within
Isabelle/HOL. Third, analogous to previous work in modal logic \cite{J23},
the provided framework could be extended to study and support first-order and
higher-order variants of the framework. 



\bibliographystyle{plain}
\bibliography{CJL} 

\newpage
\begin{appendix}
\section{Implementation in Isabelle/HOL}
The semantical embedding as devised in this article has been
implemented in the higher-order proof assistant Isabelle/HOL \cite{Isabelle}. 
Figure~\ref{fig1} displays the respective encoding. Figures  ~\ref{fig2} and
~\ref{fig3} report on some experiments.  

\begin{figure}[ht]
\includegraphics[width=\textwidth]{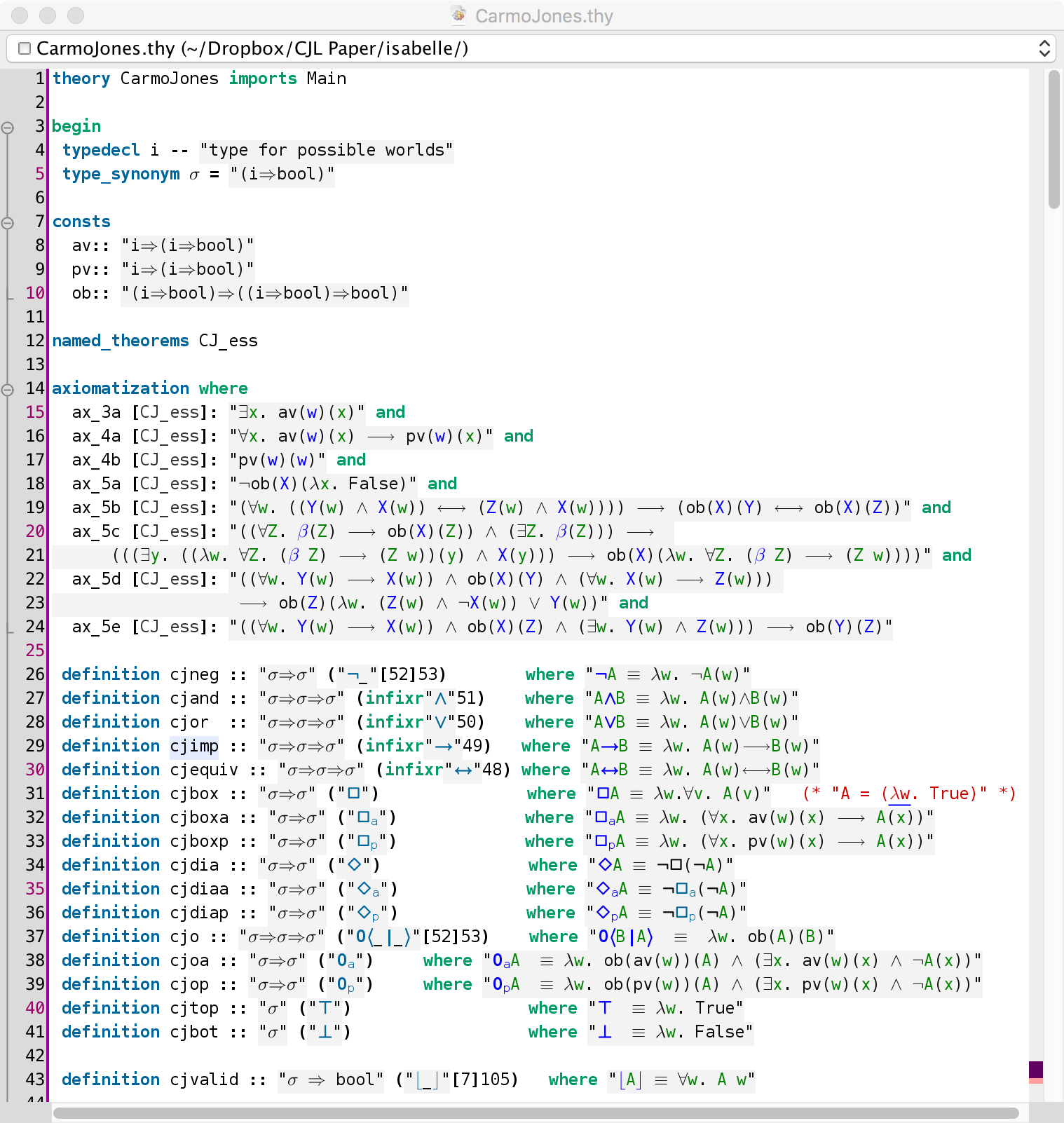}
\caption{Shallow semantical embedding of \cjl\ in Isabelle/HOL. \label{fig1}}
\end{figure}

\begin{figure}[ht]
\includegraphics[width=\textwidth]{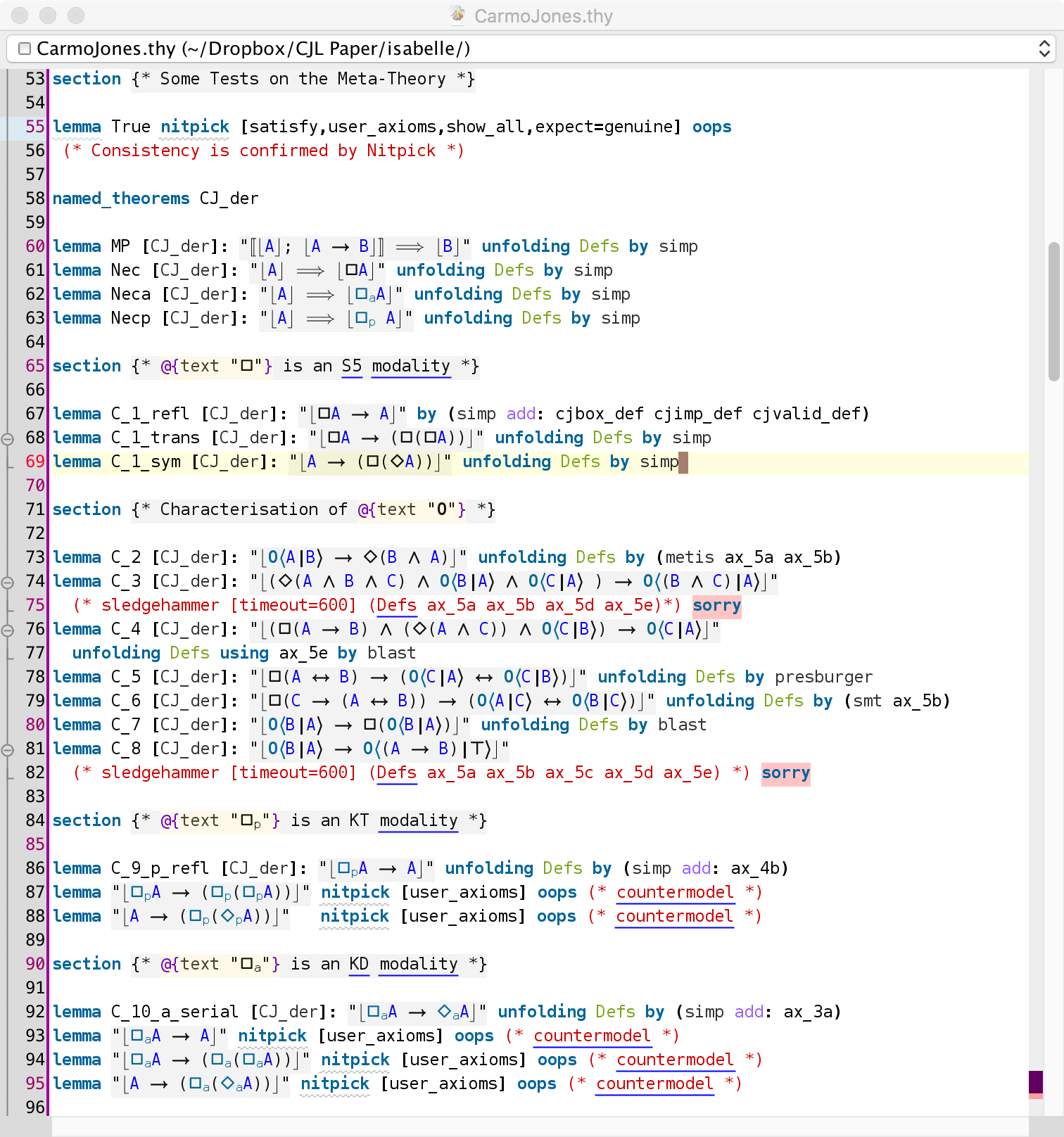}
\caption{Experiments (meta-theory) with the embedding of \cjl\ in
  Isabelle/HOL. In the ``sorry'' cases proofs can be automatically
  found by theorem provers integrated via Sledgehammer, but a
  reconstruction of these proofs in Isabelle/HOL still fails, since
  the internal provers are too weak.  \label{fig2}}
\end{figure}

\begin{figure}[ht]
\includegraphics[width=\textwidth]{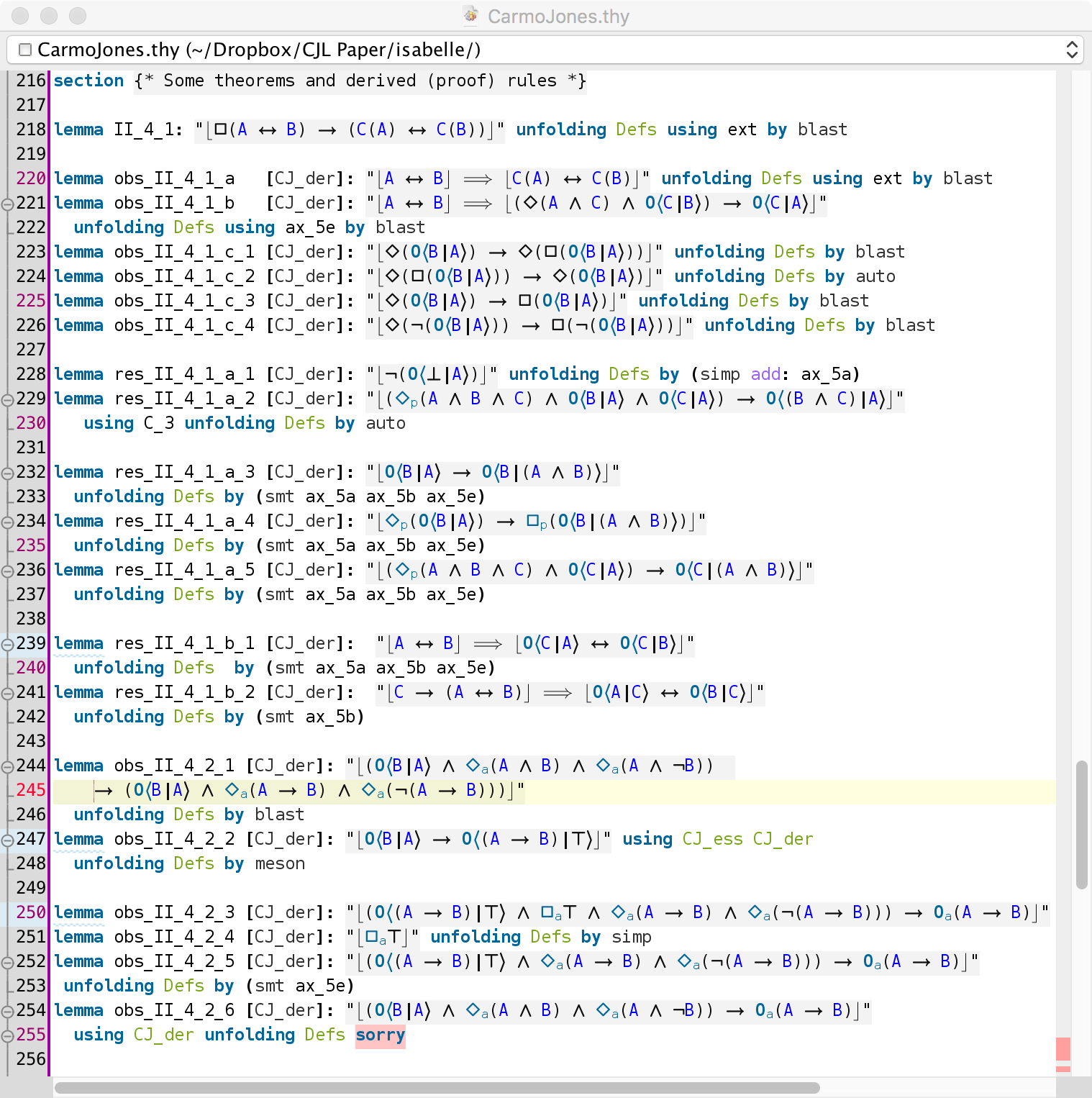}
\caption{Further experiments (lemmata and derived rules) with the embedding of \cjl\ in
  Isabelle/HOL. \label{fig3}}
\end{figure}

\end{appendix}

\end{document}